\documentclass{article}
\usepackage[preprint]{style/icml2026} 

\usepackage{microtype}
\usepackage{graphicx}
\usepackage{subcaption}
\usepackage{booktabs}
\usepackage{listings}
\usepackage{fix-cm}

\usepackage{amsmath,amssymb,mathtools,amsthm}

\usepackage{algorithm}
\usepackage{algorithmic}
\usepackage{float}

\usepackage{hyperref}
\usepackage[capitalize,noabbrev]{cleveref}

\usepackage{paralist}
\usepackage{enumitem}

\usepackage{tcolorbox}

\newif\ifanonymized
\anonymizedfalse 

\usepackage[disable,textsize=tiny]{todonotes}

\theoremstyle{plain}
\newtheorem{theorem}{Theorem}[section]
\newtheorem{proposition}[theorem]{Proposition}

\theoremstyle{definition}
\newtheorem{definition}[theorem]{Definition}

\theoremstyle{remark}

\usepackage{multirow}
\icmltitlerunning{Not All Invariants Are Equal}

\begin{document}
\twocolumn[

\icmltitle{Not All Invariants Are Equal: Curating Training Data to Accelerate Program Verification with SLMs}

\begin{icmlauthorlist}
  \icmlauthor{Ido Pinto}{huji}
  \icmlauthor{Yizhak Yisrael Elboher}{huji}
  \icmlauthor{Haoze Wu}{amherst,vmware}
  \icmlauthor{Nina Narodytska}{vmware}
  \icmlauthor{Guy Katz}{huji}
\end{icmlauthorlist}

\icmlaffiliation{huji}{Hebrew University of Jerusalem, Israel}
\icmlaffiliation{amherst}{Amherst College, USA}
\icmlaffiliation{vmware}{VMware Research by Broadcom, USA}
\icmlcorrespondingauthor{Ido Pinto}{ido.pinto@mail.huji.ac.il}
\icmlcorrespondingauthor{Guy Katz}{g.katz@mail.huji.ac.il}

\icmlkeywords{Machine Learning, Data Curation, Data Augmentation, Formal Verification, Large Language Models}

\vskip 0.3in
]

\printAffiliationsAndNotice{}

\begin{abstract}
The synthesis of inductive loop invariants remains a critical bottleneck in automated program verification. While Large Language Models (LLMs) show promise in mitigating this issue, they often fail on complex programs, producing invariants that are invalid or computationally ineffective. Although fine-tuning is a natural strategy to address these limitations, obtaining high-quality training data remains an open challenge. We first formalize the properties required for a high-quality training invariant, and then present \textsc{Wonda}, a rigorous data curation pipeline that extracts such invariants from raw verifier output via AST-based normalization followed by LLM-driven semantic rewriting and augmentation with provable quality guarantees. Fine-tuning Small Language Models (SLMs) on \textsc{Wonda}-curated data yields consistent gains across the Qwen3, Llama-3.1, and Mistral families: the 4B and 8B Qwen3 models nearly double invariant correctness and double speedup rates, while Llama-3.1-8B triples both. On the challenging InvBench suite, the same 4B model outperforms an off-the-shelf model 20$\times$ its size and matches the end-to-end verification time of GPT-OSS-120B, while a 14B Qwen3 model matches that of the frontier model GPT-5.2, all without test-time compute overhead. Our code is publicly available on \href{https://github.com/idopinto/wonda}{GitHub}.
\end{abstract}
\section{Introduction}
Automated program verification is a powerful technique for ensuring the reliability of critical software infrastructure. At the core of deductive verification lies the challenge of loop invariant synthesis: identifying a logical property that is preserved across every iteration of a loop and is strong enough to prove the program's correctness. Despite decades of research into symbolic methods such as Craig interpolation~\citep{mcmillan2003interpolation} and constraint solving~\citep{colon2003linear, gupta2009invgen}, finding inductive invariants remains a challenging problem, and a practical bottleneck for modern verifiers.

The advent of Large Language Models (LLMs) has introduced a new paradigm: \emph{generate-and-verify}. Models such as GPT-4~\citep{openai2023gpt4} and Claude~\citep{anthropic2024claude} have demonstrated an ability to hypothesize loop invariants~\citep{pei2023can}. Recent tools such as Loopy~\citep{kamath2024leveraging}, Lemur~\citep{wu2024lemur} and ACInv~\citep{liu2025acinv} leverage this capability, wrapping LLMs in iterative refinement strategies that filter hallucinations using SMT solvers.

Despite this progress, a significant gap remains. As observed by~\citet{wei2025invbench}, while LLM-based verifiers represent a promising direction, they do not yet offer a significant advantage over state-of-the-art symbolic tools, such as UAutomizer, which do not leverage LLMs.

Given these limitations, the natural question arises: can we \emph{train} an LLM that specializes in the task of invariant generation? Such specialization is standard practice in many domains and has proven successful. Indeed, within the program verification literature, this specific problem has received increasing attention~\citep{pei2023can,wei2025invbench}. However, the success of these training efforts has only been partial. Recent work on fine-tuning invariant generation models reports improvements on easy verification problems, yet achieving significant speedups on hard benchmarks remains elusive~\citep{wei2025invbench}.
We argue that data quality, rather than model scale alone, is the key bottleneck for current neural invariant synthesizers. Existing datasets~\citep{wei2025invbench} rely on raw outputs from symbolic tools (e.g.,\ UAutomizer~\citep{heizmann2013ultimate}), which suffer from two limitations:
\begin{enumerate}
    \item \emph{Low Pedagogical Value:} Solver-generated invariants are often technically correct but structurally obfuscated (see Figure~\ref{fig:raw_example}), and each output reflects only one valid choice among many for the same program. As a result, models may learn verifier-specific artifacts, rather than the underlying program logic or the broader space of useful invariants.
    \item \emph{Dependence on Solver Outputs:} When training relies heavily on solver-generated data, the model may inherit the solver's biases, making it harder to improve beyond the current symbolic state-of-the-art.

\end{enumerate}
To mitigate these limitations, we introduce \textsc{Wonda}, a rigorous data curation pipeline. Instead of directly training on verifier-generated invariants, \textsc{Wonda} explicitly optimizes for learnability. We employ Abstract Syntax Tree (AST) normalization to clean the structural representation of invariants. We further implement an LLM-driven semantic rewriting and augmentation engine that transforms obscure machine-generated invariants into concise, interpretable forms, simplifying their logical semantics while maintaining soundness. Augmentation generates multiple candidate rewrites per input, so the model is not limited to the single invariant the solver originally produced. To ensure soundness, we use a formal verification tool to validate invariants after any transformation step that may not preserve correctness.

\paragraph{Our Contributions.}
\begin{enumerate}
    \item We demonstrate that directly fine-tuning invariants generated by symbolic solvers does not reliably improve model performance and can even degrade it.
    \item We propose \textsc{Wonda}, a rigorous data curation framework that combines AST normalization with LLM-driven semantic rewriting and augmentation, transforming raw, obfuscated symbolic outputs into high-quality training signals while preserving soundness.
    \item We present a thorough experimental evaluation across the Qwen3, Llama-3.1, and Mistral model families, showing that \textsc{Wonda}-curated data lets small fine-tuned models match much larger and frontier models on end-to-end verification time, without test-time compute overhead.
\end{enumerate}


\definecolor{kw}{RGB}{160, 32, 160}     
\definecolor{op}{RGB}{180, 60, 60}      
\definecolor{num}{RGB}{20, 100, 180}    
\definecolor{id}{RGB}{40, 90, 40}       

\usetikzlibrary{shapes, arrows, positioning, fit, calc}

\begin{figure}[htb!]
\centering

\begin{tikzpicture}[
    codebox/.style={
        draw, 
        rounded corners=4pt, 
        font=\ttfamily\fontsize{6.8}{8}\selectfont, 
        align=justify, 
        fill=gray!2,   
        inner sep=6pt
    },
    sublbl/.style={font=\fontsize{7.5}{9}\selectfont\bfseries, text=gray!90}
]

\node[codebox, text width=0.98\columnwidth] (v0) {%
((((((((((((((((((( \textcolor{num}{1} \textcolor{op}{<=} \textcolor{id}{weight1} ) \textcolor{op}{\&\&} (((( \textcolor{kw}{long\ long} ) \textcolor{id}{weight1} \textcolor{op}{+} \textcolor{id}{weight2} ) \textcolor{op}{+} \textcolor{num}{1} ) \textcolor{op}{<=} \textcolor{id}{max\_threshold} )) \textcolor{op}{\&\&} ( \textcolor{id}{weight2} \textcolor{op}{<=} \textcolor{num}{10} )) \textcolor{op}{\&\&} ( \textcolor{num}{20} \textcolor{op}{<=} (( \textcolor{kw}{\_\_int128} ) (( \textcolor{kw}{long\ long} ) \textcolor{id}{weight1} \textcolor{op}{*} \textcolor{num}{14} ) \textcolor{op}{+} \textcolor{id}{cumulative\_weight} )))) \textcolor{op}{\&\&} ( \textcolor{id}{weight1} \textcolor{op}{<=} \textcolor{num}{10} ) \textcolor{op}{\&\&} ( \textcolor{id}{count} \textcolor{op}{<=} \textcolor{num}{4} )) \textcolor{op}{\&\&} ( \textcolor{num}{1} \textcolor{op}{<=} \textcolor{id}{weight2} )) \textcolor{op}{\&\&} (( \textcolor{kw}{long\ long} ) \textcolor{id}{weight1} \textcolor{op}{+} \textcolor{id}{weight2} \textcolor{op}{<=} \textcolor{id}{cumulative\_weight} )) \textcolor{op}{||} \dots\ \textcolor{op}{||} (((((((( \textcolor{num}{1} \textcolor{op}{<=} \textcolor{id}{weight1} ) \textcolor{op}{\&\&} (((( \textcolor{kw}{long\ long} ) \textcolor{id}{weight1} \textcolor{op}{+} \textcolor{id}{weight2} ) \textcolor{op}{+} \textcolor{num}{1} ) \textcolor{op}{<=} \textcolor{id}{max\_threshold} )) \textcolor{op}{\&\&} ( \textcolor{id}{count} \textcolor{op}{<=} \textcolor{num}{5} ) \textcolor{op}{\&\&} ( \textcolor{id}{weight2} \textcolor{op}{<=} \textcolor{num}{10} )) \textcolor{op}{\&\&} ( \textcolor{id}{weight1} \textcolor{op}{<=} \textcolor{num}{10} )) \textcolor{op}{\&\&} ( \textcolor{num}{20} \textcolor{op}{<=} (( \textcolor{kw}{\_\_int128} ) \textcolor{id}{cumulative\_weight} \textcolor{op}{+} (( \textcolor{kw}{long\ long} ) \textcolor{id}{weight1} \textcolor{op}{*} \textcolor{num}{13} )))) \textcolor{op}{\&\&} ( \textcolor{num}{1} \textcolor{op}{<=} \textcolor{id}{weight2} )) \textcolor{op}{\&\&} (( \textcolor{kw}{long\ long} ) \textcolor{id}{weight1} \textcolor{op}{+} \textcolor{id}{weight2} \textcolor{op}{<=} \textcolor{id}{cumulative\_weight} )))
};

\node[sublbl, below=0.08cm of v0] {3084 chars, 75 conjuncts, 11 disjuncts};
\end{tikzpicture}

\caption{Example of a raw verbose invariant generated by UAutomizer. \textsc{Wonda} transforms such outputs into more compact, learnable forms.}
\label{fig:raw_example}
\end{figure}

\definecolor{kw}{RGB}{160, 32, 160}     
\definecolor{op}{RGB}{180, 60, 60}      
\definecolor{num}{RGB}{20, 100, 180}    
\definecolor{id}{RGB}{40, 90, 40}       

\begin{figure*}[ht!]
\centering

\begin{tikzpicture}[
    codebox/.style={
        draw, 
        rounded corners=4pt, 
        font=\ttfamily\fontsize{7}{8.5}\selectfont, 
        align=left, 
        fill=gray!2, 
        inner sep=5pt
    },
    transbox/.style={
        draw, 
        rounded corners=4pt, 
        font=\ttfamily\fontsize{7}{8.5}\selectfont, 
        align=left, 
        fill=#1, 
        inner sep=5pt
    },
    llmbox/.style={
        draw, 
        rounded corners=5pt, 
        fill=violet!15,        
        font=\footnotesize\bfseries, 
        minimum height=0.6cm, 
        minimum width=1.2cm
    },
    vbsbox/.style={
        draw, 
        rounded corners=6pt, 
        fill=red!15,            
        font=\small\bfseries, 
        minimum width=1.2cm, 
        minimum height=0.75cm
    },
    lbl/.style={font=\fontsize{7.5}{9}\selectfont\bfseries},
    timelbl/.style={font=\tiny, fill=white, inner sep=1pt}
]

\node[codebox] (input) {
\textcolor{kw}{int} \textcolor{id}{x} \textcolor{op}{=} \textcolor{num}{0};\\
\textcolor{kw}{int} \textcolor{id}{y} \textcolor{op}{=} \textcolor{num}{100};\\
\textcolor{kw}{assume}(\textcolor{id}{x} \textcolor{op}{==} \textcolor{num}{0} \textcolor{op}{\&\&} \textcolor{id}{y} \textcolor{op}{==} \textcolor{num}{100}); \textcolor{darkgray}{$\triangleright$ $p$}\\
\textcolor{kw}{while} (\textcolor{id}{y} \textcolor{op}{>} \textcolor{num}{0}) \{\\
\quad \textcolor{gray}{// Location $l$}\\
\quad \textcolor{id}{x} \textcolor{op}{+=} \textcolor{num}{3};\\
\quad \textcolor{id}{y} \textcolor{op}{-=} \textcolor{num}{5};\\
\}\\
\textcolor{kw}{assert}(\textcolor{id}{x} \textcolor{op}{>} \textcolor{id}{y}); \textcolor{darkgray}{$\triangleright$ $q$}\\
};
\node[lbl, above=0.1cm of input] {Input Query: $\langle A, P, q \rangle$ where $A=\{p\}$};

\node[transbox=blue!9, right=2.2cm of input] (baseline) {
\textcolor{kw}{int} \textcolor{id}{x} \textcolor{op}{=} \textcolor{num}{0};\\
\textcolor{kw}{int} \textcolor{id}{y} \textcolor{op}{=} \textcolor{num}{100};\\
\textcolor{kw}{assume}(\textcolor{id}{x} \textcolor{op}{==} \textcolor{num}{0} \textcolor{op}{\&\&} \textcolor{id}{y} \textcolor{op}{==} \textcolor{num}{100}); \textcolor{darkgray}{$\triangleright$ $p$}\\
\textcolor{kw}{while} (\textcolor{id}{y} \textcolor{op}{>} \textcolor{num}{0}) \{\\
\quad \textcolor{id}{x} \textcolor{op}{+=} \textcolor{num}{3};\\
\quad \textcolor{id}{y} \textcolor{op}{-=} \textcolor{num}{5};\\
\}\\
\textcolor{kw}{assert}(\textcolor{id}{x} \textcolor{op}{>} \textcolor{id}{y}); \textcolor{darkgray}{$\triangleright$ $q$}\\
};
\node[lbl, above=0.1cm of baseline] {Baseline: $\mathcal{V}(A,\, P,\, q)$};

\node[llmbox, below=0.6cm of input] (llm) {LLM};

\node[font=\scriptsize, fill=orange!14, draw, rounded corners=2pt, inner sep=2.5pt, below=0.1cm of llm] (prop) {$I = (l,\, 5x + 3y == 300)$};

\node[transbox=teal!9, right=1.0cm of llm, yshift=-2.2cm] (correct) {
\textcolor{kw}{int} \textcolor{id}{x} \textcolor{op}{=} \textcolor{num}{0};\\
\textcolor{kw}{int} \textcolor{id}{y} \textcolor{op}{=} \textcolor{num}{100};\\
\textcolor{kw}{assume}(\textcolor{id}{x} \textcolor{op}{==} \textcolor{num}{0} \textcolor{op}{\&\&} \textcolor{id}{y} \textcolor{op}{==} \textcolor{num}{100}); \textcolor{darkgray}{$\triangleright$ $p$}\\
\textcolor{kw}{while} (\textcolor{id}{y} \textcolor{op}{>} \textcolor{num}{0}) \{\\
\quad \textcolor{kw}{assert}(\textcolor{num}{5}\textcolor{op}{*}\textcolor{id}{x} \textcolor{op}{+} \textcolor{num}{3}\textcolor{op}{*}\textcolor{id}{y} \textcolor{op}{==} \textcolor{num}{300}); \textcolor{darkgray}{$\triangleright$ $I$}\\
\quad \textcolor{id}{x} \textcolor{op}{+=} \textcolor{num}{3};\\
\quad \textcolor{id}{y} \textcolor{op}{-=} \textcolor{num}{5};\\
\}\\
};

\node[transbox=teal!9, right=0.5cm of correct] (useful) {
\textcolor{kw}{int} \textcolor{id}{x} \textcolor{op}{=} \textcolor{num}{0};\\
\textcolor{kw}{int} \textcolor{id}{y} \textcolor{op}{=} \textcolor{num}{100};\\
\textcolor{kw}{assume}(\textcolor{id}{x} \textcolor{op}{==} \textcolor{num}{0} \textcolor{op}{\&\&} \textcolor{id}{y} \textcolor{op}{==} \textcolor{num}{100}); \textcolor{darkgray}{$\triangleright$ $p$}\\
\textcolor{kw}{while} (\textcolor{id}{y} \textcolor{op}{>} \textcolor{num}{0}) \{\\
\quad \textcolor{kw}{assume}(\textcolor{num}{5}\textcolor{op}{*}\textcolor{id}{x} \textcolor{op}{+} \textcolor{num}{3}\textcolor{op}{*}\textcolor{id}{y} \textcolor{op}{==} \textcolor{num}{300}); \textcolor{darkgray}{$\triangleright$ $I$}\\
\quad \textcolor{id}{x} \textcolor{op}{+=} \textcolor{num}{3};\\
\quad \textcolor{id}{y} \textcolor{op}{-=} \textcolor{num}{5};\\
\}\\
\textcolor{kw}{assert}(\textcolor{id}{x} \textcolor{op}{>} \textcolor{id}{y}); \textcolor{darkgray}{$\triangleright$ $q$}\\
};

\node[lbl, yshift=+0.15cm] at (correct.north |- llm.east) (lbl_correct) {Correctness Check: $\mathcal{V}_1 = \mathcal{V}(A,\, P,\, I)$};
\node[lbl, yshift=+0.15cm] at (useful.north |- llm.east) (lbl_useful) {Sufficiency Check: $\mathcal{V}_2 = \mathcal{V}(A \cup \{I\},\, P,\, q)$};

\node[vbsbox, right=1.0cm of useful] (vbs) {VBS};

\draw[->, >=stealth, thick, gray] (input.east) -- (baseline.west);
\draw[->, >=stealth, thick, gray] (input.south) -- (llm.north)
    node[timelbl, right=0.04cm of llm, yshift=-0.15cm] {$t_m$};

\draw[->, >=stealth, thick, violet!50] (llm.east) -- (useful.north |- llm.east) -- (useful.north);
\draw[->, >=stealth, thick, violet!50] (correct.north |- llm.east) -- (correct.north);

\draw[->, >=stealth, thick, gray] (baseline.east) -| (vbs.north) node[timelbl, pos=0.3, above] {$t_b$};
\draw[->, >=stealth, thick, violet!50] (useful.east) -- (vbs.west) node[timelbl, pos=0.5, above] {$t_2$};
\draw[->, >=stealth, thick, violet!50] (correct.south) -- ++(0,-0.5) -| (vbs.south) node[timelbl, pos=0.2, below] {$t_1$};

\end{tikzpicture}

\caption{Illustration of the VBS metric. Given a query $\langle A, P, q \rangle$, the LLM proposes invariant $I=(l, \varphi)$ with inference latency $t_m$. $\mathcal{V}_1$ and $\mathcal{V}_2$ run in parallel alongside the baseline $\mathcal{V}(A, P, q)$, with $t_v = \max(t_1, t_2)$. VBS selects $\min(t_v, t_b)$ if $I$ is correct and the decomposed checks yield a conclusive result (per Table 1), and falls back to $t_b$ otherwise. For the end-to-end metric $\text{VBP}_{\text{E2E}}$, $t_m$ is added to $t_v$ for sufficient instances.}
\label{fig:vbs}
\end{figure*}
\section{Related Work}
\label{sec:related}

\textbf{Traditional Invariant Synthesis.}
Invariant synthesis has been extensively studied through various formalisms. Abstract Interpretation~\citep{cousot1977abstract} gave rise to techniques for discovering specific classes of invariants, such as affine relationships~\citep{karr1976affine} and linear restraints~\citep{cousot1978automatic} among variables. Model Checking~\citep{clarke2018introduction} and Predicate Abstraction~\citep{flanagan2002predicate, lahiri2007predicate} infer invariants by refining abstract states based on a set of predicates, while Craig Interpolation~\citep{mcmillan2003interpolation} generates loop invariants from proofs of unsatisfiability in bounded model checking traces. Tools like UAutomizer~\citep{heizmann2013ultimate} and Eldarica~\citep{hojjat2018eldarica} build on these approaches. A separate line of work casts invariant generation as a constraint-satisfaction problem solved with off-the-shelf solvers~\citep{colon2003linear,gupta2009invgen, fedyukovich2018accelerating}. Finally, dynamic analysis tools like Daikon~\citep{ernst2007daikon} infer likely invariants by observing program execution traces; while efficient, dynamic methods are unsound, as they can only guarantee correctness for the observed executions.

\textbf{Learning-based Invariant Synthesis.}
Early techniques for data-driven invariant synthesis relied on decision trees~\citep{garg2016learning}, algebraic inference~\citep{sharma2013data}, and Horn-ICE learning~\citep{ezudheen2018horn}. \citet{pei2023can} explored fine-tuning LLMs for invariant prediction, training on Daikon~\citep{ernst2007daikon} outputs. With the rise of code-capable LLMs, more recent frameworks employ iterative \emph{generate-and-verify} loops, using symbolic solvers to filter or refine LLM proposals: Loopy~\citep{kamath2024leveraging} applies Houdini-based filtering of LLM-generated candidates, LaM4Inv~\citep{wu2024llm} iteratively queries the LLM and uses BMC to filter and reassemble candidate predicates across rounds, LEMUR~\citep{wu2024lemur} introduces backtracking to repair invalid invariants. Related directions include contrastive ranking~\citep{chakraborty2023ranking}, C++ class invariants~\citep{sun2025classinvgen}, and complex loop structures~\citep{liu2025acinv}.

\textbf{Fine-tuning for Invariant Synthesis.} Fine-tuning LLMs to propose a single invariant, subsequently verified by a symbolic solver, has received increasing attention. Unlike \citet{pei2023can}, who train on Daikon outputs without formal verification of the generated invariants, \citet{wei2025invbench} introduce a one-shot setting using UAutomizer-generated training data, which is formally correct but, as we show, structurally noisy (see Figure~\ref{fig:raw_example}). We adopt the same evaluation framework and focus on improving training data quality and diversity, producing more compact and generalizable invariants. The one-shot and iterative directions are complementary: while iterative tools such as Lemur~\citep{wu2024lemur} and Loopy~\citep{kamath2024leveraging} refine invariants over multiple LLM calls, a stronger one-shot generator provides better initialization for such loops.
\section{Preliminaries}

We ground our approach in Hoare logic \citep{hoare1969axiomatic}, a seminal framework for reasoning about program correctness. A Hoare triple $\{p\}\, S \,\{q\}$ asserts that if precondition $p$ holds before executing statement $S$, then postcondition $q$ holds afterward. While sequential statements are straightforward to compose, loops pose a fundamental challenge: they may execute for an unbounded number of iterations, requiring an auxiliary predicate, a \emph{loop invariant}, to enable finite reasoning.

For a loop \texttt{while}~$B$~\texttt{do}~$S$ with precondition $p$ and postcondition $q$, a valid invariant $I$ must satisfy:
\begin{inparaenum}[(i)]
    \item \emph{Initiation:} $p \Rightarrow I$, meaning the invariant holds upon loop entry;
    \item \emph{Consecution:} $\{I \land B\}\, S \,\{I\}$, meaning the invariant is preserved by each iteration; and
    \item \emph{Sufficiency:} $I \land \neg B \Rightarrow q$, meaning that upon termination, the invariant implies the postcondition.
\end{inparaenum}
A predicate satisfying both initiation and consecution is termed an \emph{inductive invariant}. Throughout this paper, we refer to this property as \emph{correctness} and use the terms interchangeably; if the predicate additionally satisfies sufficiency, it constitutes a formal proof of the postcondition $q$.

\paragraph{Running Example.}
We illustrate these concepts using the program shown in Figure~\ref{fig:vbs} (top-left block). The program initializes $x = 0$ and $y = 100$ (the precondition $p$), then repeatedly increments $x$ by 3 and decrements $y$ by 5 until $y \le 0$. The verification goal is to prove the postcondition $q \equiv x > y$.
We claim that the candidate invariant $I\equiv5x+3y=300$ satisfies these criteria. It is \emph{inductive} because it holds initially $(5(0)+3(100)=300)$ and is preserved by the loop body updates: assuming $I$ holds, the new state satisfies $5(x+3)+3(y-5)=5x+15+3y-15=5x+3y=300$.
Furthermore, the invariant is \emph{sufficient}: upon loop termination, $\neg B\equiv y\le0$ combined with $I$ implies $5x + 3y = 300 \land y \le 0$. Substituting, $5x = 300 - 3y \ge 300$ (since $-3y \ge 0$), so $x \ge 60$. As $x \ge 60 > 0 \ge y$, the postcondition $q \equiv x > y$ follows.

\subsection{Program Verification}
Let $P$ denote a program, let $\mathrm{Loc}(P)$ denote its set of program locations, and let $\mathcal{L}(P) \subseteq \mathrm{Loc}(P)$ denote its loop entry locations.
\begin{definition}[Property]
    A \emph{property} is a pair $(l, \varphi)$, where $l\in \mathrm{Loc}(P)$ and $\varphi$ is a predicate over the variables of program $P$.  
    For a given execution of $P$,
    a property \emph{holds}, or is \emph{satisfied}, if $\varphi$ \emph{holds}, whenever $P$ reaches line $l$. 
\end{definition}

\begin{definition}[Verification Query]\label{def:ver_oracle}
A \emph{verification query} is a triple $\langle A, P, q\rangle$ where $P$ is a program, $q$ is a property (the postcondition), and $A$ is a set of properties representing preconditions.
\end{definition}

\begin{definition}[Verification Query Validity]\label{def:validity}
A verification query $\langle A, P, q\rangle$ is \emph{valid} if every execution of $P$ that satisfies the (precondition) properties in $A$ also satisfies the postcondition $q$.
\end{definition}

\begin{definition}[Verification Oracle]
A \emph{verification oracle} $\mathcal{V}$ takes a verification query $\langle A, P, r \rangle$ and returns:
\[
    \mathcal{V}(A, P, r) \in \{\textsc{True}, \textsc{False}, \textsc{Unknown}\}
\]
where \textsc{True} indicates the query is valid, \textsc{False} indicates a counterexample exists, and \textsc{Unknown} indicates the oracle could not determine the result (e.g., due to timeout or inherent incompleteness).
Here, we consider only \emph{sound} oracles: if $\mathcal{V}(A, P, r) = \textsc{True}$, then the query is valid; if $\mathcal{V}(A, P, r) = \textsc{False}$, then a counterexample exists.
\end{definition}

Verification queries use \texttt{assume($\varphi$)} and \texttt{assert($\varphi$)} statements. \texttt{assume($\varphi$)} at line $l$ restricts traces to those satisfying $\varphi$ (equivalent to \texttt{if ($\neg\varphi$) halt}), while \texttt{assert($\varphi$)} jumps to ERROR if violated (equivalent to \texttt{if ($\neg\varphi$) goto ERROR}). For query $\langle A, P, q \rangle$, we annotate $P$ with \texttt{assume($\varphi$)} for each $(l,\varphi)\in A$ and \texttt{assert($\varphi$)} where $q=(l,\varphi)$. $P$ is safe with respect to the verification query if and only if ERROR is unreachable.

\subsection{Program Verification using Invariants}
\label{sec:verification_procedure}

Given a verification query $\langle A, P, q \rangle$, the task can be addressed through a direct invocation of the verifier: $\mathcal{V}(A, P, q)$. Alternatively, we can break the problem into sub-problems, by introducing a candidate invariant property $I = (l, \varphi)$, where $l \in \mathcal{L}(P)$. This approach splits the verification task into two distinct queries:
\begin{enumerate}
    \item \emph{Correctness Check} ($\mathcal{V}_1$): Verify that the property $I$ is correct (inductive) within the program $P$ given $A$: $\mathcal{V}_1 := \mathcal{V}(A, P, I)$
    \item \emph{Sufficiency Check} ($\mathcal{V}_2$): Verify that the target postcondition $q$ holds, assuming the correctness of the candidate invariant $I$: $\mathcal{V}_2 := \mathcal{V}(A \cup \{I\}, P, q)$
\end{enumerate}

We denote by $t_b$ the wall-clock time of direct verification $\mathcal{V}(A,P,q)$, and by $t_v = \max(t_1, t_2)$ the parallel execution time of the correctness and sufficiency checks, where $t_1$ and $t_2$ are their respective solving times.

The final outcome is determined by Table~\ref{tab:decision}. This procedure is sound; conclusive outcomes (\textsc{True} or \textsc{False}) are guaranteed to be correct \citep{wei2025invbench}. This approach augments the verification problem with new knowledge that can be useful to prove the desired property. The correctness and sufficiency checks are independent and can therefore be performed in parallel, and in the case where both queries are simpler than the original query, the wall-clock verification time is reduced.

\begin{table}[ht]
\centering
\small
\caption{Decision procedure for verification queries using a candidate invariant. Note that a \textsc{False} outcome in the sufficiency check implies the original query is invalid regardless of the invariant's correctness.}
\begin{tabular}{ccc}
\toprule
$\mathcal{V}_1$ (Correctness) & $\mathcal{V}_2$ (Sufficiency) & Outcome \\
\midrule
\textsc{True} & \textsc{True} & \textsc{True} \\
$*$ & \textsc{False} & \textsc{False} \\
\multicolumn{2}{c}{otherwise} & \textsc{Unknown} \\
\bottomrule
\end{tabular}
\label{tab:decision}
\end{table}
An illustration of the approach appears in Figure~\ref{fig:vbs}. Instead of directly verifying the query (Baseline block, top-right), we can use the invariant $I=(l, 5x+3y=300)$ and verify its correctness (Correctness Check block, bottom-left) and sufficiency (Sufficiency Check block, bottom-right). When the verifier returns $\textsc{True}$ for both of these queries, we can immediately deduce the safety of the program.

\textbf{Generating Invariants.}
The effectiveness of the aforementioned techniques is conditional upon our ability to generate useful invariant candidates, i.e., candidates that will allow us to reach a \textsc{True}/\textsc{False} outcome, as per Table~\ref{tab:decision}.
We define the invariant generation task, which is the primary focus of this work, as follows: given a verification query $\langle A, P, q \rangle$, a verification oracle $\mathcal{V}$, and a designated loop entry $l \in \mathcal{L}(P)$, the objective is to synthesize a predicate $\varphi$ such that the resulting property $I = (l, \varphi)$ satisfies the correctness and sufficiency checks; i.e., $\mathcal{V}$ returns \textsc{True} for the corresponding correctness and sufficiency queries. We also regard a correct candidate whose sufficiency check returns \textsc{False} as desirable, since this witnesses a genuine bug in $P$.

\definecolor{kw}{RGB}{160, 32, 160}     
\definecolor{op}{RGB}{180, 60, 60}      
\definecolor{num}{RGB}{20, 100, 180}    
\definecolor{id}{RGB}{40, 90, 40}       

\begin{figure*}[ht!]
\centering

\begin{tikzpicture}[
    codebox/.style={
        draw, 
        rounded corners=4pt, 
        font=\ttfamily\fontsize{7}{8.5}\selectfont, 
        align=left, 
        fill=#1, 
        inner sep=5pt
    },
    codebox/.default=white,
    llmbox/.style={
        draw, 
        rounded corners=5pt, 
        fill=violet!15, 
        font=\footnotesize\bfseries, 
        minimum height=0.6cm, 
        minimum width=1.2cm
    },
    lbl/.style={font=\fontsize{7.5}{9}\selectfont\bfseries},
    sublbl/.style={font=\tiny, text=gray},
    arrowlbl/.style={font=\tiny\itshape, fill=white, inner sep=2pt}
]

\node[codebox=orange!12, text width=4.6cm] (v0) {
(((((((((((((((((((((( \textcolor{num}{36} \textcolor{op}{<=} \textcolor{id}{y} ) \textcolor{op}{\&\&} ( \textcolor{num}{36} \textcolor{op}{<=} \textcolor{id}{x} ))\\
\quad \textcolor{op}{||} (( \textcolor{num}{21} \textcolor{op}{<=} \textcolor{id}{y} ) \textcolor{op}{\&\&} ( \textcolor{num}{45} \textcolor{op}{<=} \textcolor{id}{x} )))\\
\quad \textcolor{op}{||} (( \textcolor{num}{21} \textcolor{op}{<=} \textcolor{id}{x} ) \textcolor{op}{\&\&} ( \textcolor{num}{65} \textcolor{op}{<=} \textcolor{id}{y} )))\\
\quad \textcolor{op}{||} (( \textcolor{num}{42} \textcolor{op}{<=} \textcolor{id}{x} ) \textcolor{op}{\&\&} ( \textcolor{num}{26} \textcolor{op}{<=} \textcolor{id}{y} )))\\
\quad \textcolor{op}{||} (( \textcolor{num}{12} \textcolor{op}{<=} \textcolor{id}{x} ) \textcolor{op}{\&\&} ( \textcolor{num}{80} \textcolor{op}{<=} \textcolor{id}{y} )))\\
\quad \textcolor{op}{||} (( \textcolor{num}{11} \textcolor{op}{<=} \textcolor{id}{y} ) \textcolor{op}{\&\&} ( \textcolor{num}{51} \textcolor{op}{<=} \textcolor{id}{x} )))\\
\quad \textcolor{gray}{\vdots}\\
\quad \textcolor{op}{||} (( \textcolor{num}{16} \textcolor{op}{<=} \textcolor{id}{y} ) \textcolor{op}{\&\&} ( \textcolor{num}{48} \textcolor{op}{<=} \textcolor{id}{x} )))
};
\node[lbl, above=0.1cm of v0] {$V_0$: Raw Verifier Output};
\node[sublbl, below=0.08cm of v0] {603 chars, 21 disjuncts};

\node[codebox=yellow!12, text width=3.4cm, right=1.6cm of v0] (v1) {
\textcolor{num}{36} \textcolor{op}{<=} \textcolor{id}{y} \textcolor{op}{\&\&} \textcolor{num}{36} \textcolor{op}{<=} \textcolor{id}{x}\\
\textcolor{op}{||} \textcolor{num}{21} \textcolor{op}{<=} \textcolor{id}{y} \textcolor{op}{\&\&} \textcolor{num}{45} \textcolor{op}{<=} \textcolor{id}{x}\\
\textcolor{op}{||} \textcolor{num}{21} \textcolor{op}{<=} \textcolor{id}{x} \textcolor{op}{\&\&} \textcolor{num}{65} \textcolor{op}{<=} \textcolor{id}{y}\\
\textcolor{op}{||} \textcolor{num}{42} \textcolor{op}{<=} \textcolor{id}{x} \textcolor{op}{\&\&} \textcolor{num}{26} \textcolor{op}{<=} \textcolor{id}{y}\\
\textcolor{op}{||} \textcolor{num}{12} \textcolor{op}{<=} \textcolor{id}{x} \textcolor{op}{\&\&} \textcolor{num}{80} \textcolor{op}{<=} \textcolor{id}{y}\\
\quad \textcolor{gray}{\vdots}\\
\textcolor{op}{||} \textcolor{num}{16} \textcolor{op}{<=} \textcolor{id}{y} \textcolor{op}{\&\&} \textcolor{num}{48} \textcolor{op}{<=} \textcolor{id}{x}
};
\node[lbl, above=0.1cm of v1] {$V_1$: Normalized};
\node[sublbl, below=0.08cm of v1] {441 chars, 21 disjuncts};

\node[llmbox, right=0.6cm of v1] (llm) {LLM};

\node[codebox=green!12, right=1.2cm of llm, minimum height=1.5cm, text width=3.2cm, align=center, font=\ttfamily\small] (v2) {
\vfill
\textcolor{num}{5}\textcolor{op}{*}\textcolor{id}{x} \textcolor{op}{+} \textcolor{num}{3}\textcolor{op}{*}\textcolor{id}{y} \textcolor{op}{==} \textcolor{num}{300}
\vfill
};
\node[lbl, above=0.1cm of v2] {$V_2$: Simplified};
\node[sublbl, below=0.08cm of v2] {18 chars, 1 equation};

\draw[->, >=stealth, thick, gray] (v0.east) -- (v1.west) node[arrowlbl, midway, above] {normalize};
\draw[->, >=stealth, thick, gray] (v1.east) -- (llm.west);
\draw[->, >=stealth, thick, gray] (llm.east) -- (v2.west) node[arrowlbl, midway, above] {simplify};

\end{tikzpicture}
\caption{\textsc{Wonda} pipeline. The raw verifier output ($V_0$) is normalized ($V_1$), then an LLM simplifies it to a compact closed-form expression ($V_2$): $5x + 3y = 300$, achieving a \textbf{2x} verification speedup.}
\label{fig:pip_example1}
\end{figure*}
\section{Methodology}
\label{sec:method}
The main component of the \emph{generate-and-verify} paradigm is the LLM that performs invariant synthesis. In practice, off-the-shelf LLMs are not very effective at this task and need to be fine-tuned for the invariant generation domain. Somewhat counterintuitively, fine-tuning on data produced by modern verification tools does not show a performance boost on challenging problems~\citep{wei2025invbench}. We argue that fine-tuning can in fact be quite useful, provided the training data is of high quality. To this end, we investigate two research questions: (a) how to define high-quality training data in this context; and (b) how to produce it.

Given a verification query $\langle A, P, q \rangle$, a verification oracle $\mathcal{V}$ and loop locations $\mathcal{L}(P)$, our goal is to produce training samples that pair programs with high-quality loop invariants.
We claim that a good invariant $I=(l, \varphi)$ should satisfy the following characteristics:
\begin{enumerate}
    \item \emph{Non-Degeneracy:} we exclude trivial invariants $\varphi \in \{\textsc{False}, \textsc{True}\}$.
    \item \emph{Correctness:} invariant $\varphi$ holds at location $l$ on all executions, i.e., $\mathcal{V}(A,P,I) = \textsc{True}$.
    \item \emph{Usefulness:} using $I$ expedites verification times. This entails sufficiency, i.e., $\mathcal{V}_2 = \textsc{True}$, and that $t_v < t_b$, where $t_v$ and $t_b$ are as defined in Section~\ref{sec:verification_procedure}.
    \item \emph{Compactness:} invariant $I$ has a succinct syntactic form, which we hypothesize facilitates learning and improves generalization.
\end{enumerate}

\paragraph{Grounding to Verifier-Generated Invariants.}
Following \citet{wei2025invbench}, we initialize our curation process using invariants extracted from UAutomizer.  When UAutomizer proves a verification query $\langle \{p\}, P, q \rangle$, it emits discovered invariants $\{(l, \varphi_{\text{raw}})\}$ as part of its proof. 
These invariants satisfy \emph{correctness} by construction. However, they are not guaranteed to be \emph{useful} (they may not provide speedup) or \emph{compact} (they often contain tool-specific artifacts). Figure~\ref{fig:raw_example} depicts an example: while correct, the raw invariant is cluttered with type casts and verbose structure, making it a poor training target.

We address these issues with a two-stage pipeline:
\begin{enumerate}
    \item \emph{Invariant Normalization} (\S\ref{sec:invariant-normalization}): AST-based rewriting that removes tautologies, contradictions, minimizes parentheses and strips redundant type casts.
    \item \emph{LLM-Based Simplification} (\S\ref{sec:llm-simplification}): A data augmentation step where an LLM generalizes verbose invariants into compact candidates, each verified for correctness and usefulness.
\end{enumerate}
Figure~\ref{fig:pip_example1} illustrates output produced by our pipeline.
Another example appears in Figure~\ref{fig:pipeline_example2} (Appendix~\ref{app:pip_examples}).

\subsection{Invariant Normalization}
\label{sec:invariant-normalization}

Raw invariants from automated verifiers are often cluttered with syntactic noise that obscures underlying program logic. This noise typically manifests as: (i)~tautological clauses (e.g., \texttt{n <= n}); (ii)~redundant parenthesization, and (iii)~excessive integral type casts (e.g., \texttt{\_\_int128} and \texttt{long long}) used for bit-precise semantics. In rare cases, verifiers may even produce contradictions within unreachable or dead-code paths, such as \texttt{(x > x),} which can confuse models during training.

We apply a semantic-preserving AST normalization, \textsc{Normalize}, which performs a single bottom-up traversal using the rules in Table~\ref{tab:rewrite-rules}. It also strips the invariants of any unnecessary parentheses, based on standard C operator precedence rules.

\begin{table}[H]
\centering
\small
\caption{Rewrite rules applied during AST normalization. $e$ represents a numeric variable, $\varphi$ a predicate, $c$ a constant, and $\bowtie \in \{\leq, \geq, =, <, >, \neq\}$ a relational operator.}
\begin{tabular}{@{}ll@{}}
\toprule
\textbf{Rule} & \textbf{Transformation} \\
\midrule
\multicolumn{2}{@{}l}{\textit{Tautology Elimination}} \\
\textsc{TautConj} & $\varphi \land \mathit{true}$ or $\mathit{true} \land \varphi \;\longrightarrow\; \varphi$ \\
\textsc{TautRefl} & $e \bowtie e \;\longrightarrow\; \mathit{true}$ \quad for $\bowtie \in \{\leq, \geq, =\}$ \\
\textsc{TautConst} & $c_1 \bowtie c_2 \;\longrightarrow\; \mathit{true}$ \quad if $c_1 \bowtie c_2$ holds \\
\midrule
\multicolumn{2}{@{}l}{\textit{Contradiction Propagation}} \\
\textsc{ContraConj} & $\varphi \land \mathit{false}$ or $\mathit{false} \land \varphi \;\longrightarrow\; \mathit{false}$ \\
\textsc{ContraDisj} & $\varphi \lor \mathit{false}$ or $\mathit{false} \lor \varphi \;\longrightarrow\; \varphi$ \\
\textsc{ContraRefl} & $e \bowtie e \;\longrightarrow\; \mathit{false}$ \quad for $\bowtie \in \{<, >, \neq\}$ \\
\bottomrule
\end{tabular}
\label{tab:rewrite-rules}
\end{table}

\begin{proposition}[Normalization Soundness]
For any predicate $\varphi$, the normalized predicate $\varphi' = \textsc{Normalize}(\varphi)$ is semantically equivalent to the original ($\varphi' \equiv \varphi$).
\end{proposition}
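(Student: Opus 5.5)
The plan is a structural induction on the AST of $\varphi$, matching the single bottom-up traversal that \textsc{Normalize} performs. Semantic equivalence will mean that for every valuation $\sigma$ of the program variables (and, for terms, every evaluation under the C semantics fixed by the verifier), $\sigma \models \varphi$ iff $\sigma \models \varphi'$. The first step is a lemma stating that each rewrite rule in Table~\ref{tab:rewrite-rules} is a local equivalence, i.e.\ its left-hand side and right-hand side agree under every valuation. The second step is a congruence lemma: replacing a subformula by an equivalent one preserves equivalence of the enclosing formula, since $\land$, $\lor$, $\neg$ and the relational operators are interpreted compositionally. The proposition then follows by induction. At a node, the children have already been replaced by equivalent normalized forms (induction hypothesis plus congruence), and any rule then fired at the node replaces it by an equivalent expression (local lemma). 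Equivalence is transitive, so it survives the whole traversal.

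The rule-by-rule check is routine Boolean and arithmetic reasoning. \textsc{TautConj}, \textsc{ContraConj} and \textsc{ContraDisj} are the identity and annihilator laws of $\land$ and $\lor$. \textsc{TautRefl} and \textsc{ContraRefl} use reflexivity of $\le,\ge,=$ and irreflexivity of $<,>,\neq$; this requires that $e$ evaluates to a single well-defined value in a given state. The invariants are side-effect-free predicates over program variables, and integer values (no NaN) are involved, so this holds; I would state this as an explicit assumption on the predicate language. \textsc{TautConst} evaluates a closed comparison, whose truth value is constant, so replacing it by $\mathit{true}$ is exact. Although the table does not list it, I would also include the dual case where a false constant comparison becomes $\mathit{false}$.

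Two remaining transformations are not in the table and need separate arguments. \emph{Parenthesis removal} is handled by noting that the AST is the semantic object. Printing with minimal parentheses under the C precedence and associativity rules must re-parse to the same tree, so I would prove that the printer inserts parentheses exactly when a child's precedence is lower than, or at the same level on the non-associative side as, its parent's. \emph{Cast stripping} is the step I expect to be the main obstacle. Removing \texttt{(long long)} or \texttt{(\_\_int128)} is sound only when the casted arithmetic cannot overflow the narrower type, or when it is interpreted in mathematical integers. I would handle this either by adopting the verifier's unbounded-integer semantics in which the invariants are stated, where casts are identities, or by restricting stripping to widening casts whose removal provably preserves values. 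The proof would make this semantic assumption explicit rather than claim equivalence under bit-precise C semantics in general.
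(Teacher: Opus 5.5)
Your proposal is correct and follows essentially the paper's route. The paper's proof is a one-line version of the same bottom-up structural induction: each rule in Table~\ref{tab:rewrite-rules} is a standard logical identity, and parenthesis minimization respects C precedence and associativity. Your local-equivalence and congruence lemmas, together with the explicit side-effect-free and well-defined-value assumptions, just make that argument precise.

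The only mismatch is cast stripping, which you need not treat at all. The paper deliberately keeps it outside \textsc{Normalize}, concedes that it can break equivalence under overflow, and instead relies on re-verifying the final invariants with the verifier.
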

\begin{proof}
The rewrite rules implement standard first-order logic identities applied inductively via bottom-up AST traversal. The parenthesis minimization follows the C operator precedence while preserving associativity and binding.
\end{proof}

As an additional optimization step, we eliminate integral casts to reduce clutter. While stripping casts can break soundness (e.g., under certain overflow conditions), we prioritize logical clarity for model training and formally verify the final generated invariants against the original program to ensure correctness is maintained.

\subsection{LLM-Based Invariant Simplification}
\label{sec:llm-simplification}
AST normalization removes syntactic noise, but many invariants remain verbose due to \emph{semantic} complexity: enumerated cases, redundant bounds, or overly strong constraints. These patterns require reasoning beyond local rewrites.
To bridge this gap, we employ an LLM as a \emph{simplification function} $f_\theta: \Phi \to \Phi^N$, where $\Phi$ is the set of possible predicates and $N$ is the number of generated candidates. Unlike the rules in Table~\ref{tab:rewrite-rules}, $f_\theta$ is not guaranteed to be semantics-preserving ($f_\theta(\varphi) \not\equiv \varphi$). Instead, we leverage the LLM’s ability to perform \emph{abstraction}, aiming for candidates that are more compact yet still sufficient to prove property $q$. 
The resulting predicate can be equivalent, stronger, weaker or incomparable with the original predicate; this does not jeopardize soundness, as we later invoke the verifier to ensure the correctness and sufficiency of the rewritten predicates. 

\paragraph{Simplification Procedure.} 
To guide the LLM toward high-quality invariants, we provide a prompt context $\mathcal{C} = \langle P, \varphi, \mathcal{G} \rangle$ containing the program $P$, the reference invariant $\varphi$, and a  set of transformation goals $\mathcal{G}$. The following transformation goals capture our main insights for achieving the desired characteristics of \emph{compactness} and \emph{usefulness}.

\noindent\textit{Range generalization.} We observed that a large number of raw invariants have a case-enumeration structure; for example, $\bigvee_{i=1}^k (x=i)$ might come from a loop that runs for $k$ steps. We prompt an LLM to simplify such invariants into a more compact range invariant, $1 \le x \le k$, which is easier to learn than the enumeration representation.

\noindent\textit{Constraint factoring.} Some invariants form Boolean expressions that have unnecessarily complex logical structure, e.g., $(a \land b_1) \lor \ldots \lor (a \land b_k)$. We encourage an LLM to perform logical simplification, e.g., rewrite this as $a \land (b_1 \lor \ldots \lor b_k)$.

\noindent\textit{Closed-form discovery.} Often, we would like to generalize the normalized invariant to replace case enumerations with arithmetic relations, where linear expressions are preferred for solver efficiency, though non-linear closed forms are also possible. This gives us compact, reusable invariants that are easier to learn; for example, $\sum_{i=1}^n i \to \frac{n(n+1)}{2}$.

\noindent\textit{Redundancy removal.} Some expressions are implied by program semantics and hold throughout the program, e.g., preconditions or variables with fixed values. To simplify them, we can remove these redundant clauses to obtain more compact and easier-to-learn invariants. 

Overall, our strategy encourages semantic abstraction over syntactic rewriting; for the full prompt, see Appendix~\ref{app:prompt-invariant-simplification}.

\begin{definition}[Quality Grade $G$] 
To measure the utility of a candidate $\varphi'$ with respect to a verification query $\mathcal{Q} = \langle A, P, q \rangle$ and baseline time $t_b$, we define a grading function $G(\varphi', \mathcal{Q}, t_b) \in \{0,1,2,3\}$ based on $\mathcal{V}_1$, $\mathcal{V}_2$ checks defined in \S\ref{sec:verification_procedure}.
\[
G(\varphi', \mathcal{Q}, t_b) = \begin{cases} 
0 &  \mathcal{V}_1 \neq \textsc{True} \\
1 &  \mathcal{V}_1 = \textsc{True} \land \mathcal{V}_2 \neq \textsc{True} \\
2 &  \mathcal{V}_1 = \textsc{True} \land \mathcal{V}_2 = \textsc{True} \land t_v \ge t_b \\
3 &  \mathcal{V}_1 = \textsc{True} \land \mathcal{V}_2 = \textsc{True} \land t_v < t_b
\end{cases}
\]
where $t_v = \max(t_1, t_2)$ and $t_b$ are as defined in Section~\ref{sec:verification_procedure}. The full grading procedure is given in Algorithm~\ref{alg:verify} (Appendix~\ref{app:algorithms}). Syntactically invalid candidates are discarded before grading. We retain candidates with $G(\varphi', \mathcal{Q}, t_b) \ge 2$ as ``golden'' training samples.
\end{definition}

The simplification procedure is given in Algorithm~\ref{alg:simplify} (Appendix~\ref{app:algorithms}). It first discards degenerate invariants, i.e., $\varphi_{\text{norm}} \in \{\textsc{False}, \textsc{True}\}$. Otherwise, if the invariant is deemed verbose (in our implementation, if $|\varphi_{\text{norm}}| > \eta$ for a threshold $\eta>0$), an LLM generates $N$ candidates. These candidates are deduplicated by exact match, filtered again for degeneracy, and each is graded using Algorithm~\ref{alg:verify}; we retain those with grade $g \ge 2$ as pairs $(\varphi, g)$. If no candidate qualifies (or if $\varphi_{\text{norm}}$ is not verbose), we instead grade $\varphi_{\text{norm}}$ itself.

\begin{proposition}[Pipeline Soundness]
If the pipeline outputs $(l, \varphi^*)$, then $\varphi^*$ is a correct loop invariant sufficient to prove $q$.
\end{proposition}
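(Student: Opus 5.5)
The argument rests on one structural fact: every predicate the pipeline emits has passed the grading function $G$ with grade at least $2$. From that, the conclusion follows from the definition of $G$ together with the soundness of the oracle $\mathcal{V}$. We need no semantic property of the LLM $f_\theta$ or of the cast-stripping step. This matters because neither is semantics-preserving: $f_\theta$ is explicitly allowed to return a non-equivalent predicate, and cast elimination can break soundness under overflow.

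\emph{Step 1: every output path goes through the grader.} Trace Algorithm~\ref{alg:simplify}. A degenerate $\varphi_{\text{norm}}\in\{\textsc{True},\textsc{False}\}$ produces no output. If $\varphi_{\text{norm}}$ is verbose, each deduplicated, non-degenerate LLM candidate $\varphi'$ is graded, and only pairs with $g = G(\varphi',\mathcal{Q},t_b)\ge 2$ are kept. In the remaining cases (no candidate qualifies, or $\varphi_{\text{norm}}$ is not verbose), $\varphi_{\text{norm}}$ itself is graded under the same retention rule. Normalization Soundness is not needed here, because $\varphi_{\text{norm}}$ is re-verified directly, including after cast removal. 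So any output $(l,\varphi^*)$ satisfies $G(\varphi^*,\mathcal{Q},t_b)\ge 2$.

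\emph{Step 2: grade at least $2$ gives correctness and sufficiency.} By the definition of $G$, a grade of $2$ or $3$ means $\mathcal{V}_1=\mathcal{V}(A,P,I)=\textsc{True}$ and $\mathcal{V}_2=\mathcal{V}(A\cup\{I\},P,q)=\textsc{True}$, where $I=(l,\varphi^*)$. The oracle is sound, so both queries are valid.
\begin{itemize}
\item Validity of the first query says every execution satisfying $A$ satisfies $\varphi^*$ whenever it reaches $l$. So $I$ is correct in the paper's sense of an inductive invariant.
\item Validity of the second query says every execution satisfying $A\cup\{I\}$ satisfies $q$.
\end{itemize}
Combining them: every execution satisfying $A$ also satisfies $I$, hence satisfies $A\cup\{I\}$, hence satisfies $q$. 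This is exactly the \textsc{True} row of Table~\ref{tab:decision}, so $I$ is sufficient to prove $q$.

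\emph{Main obstacle.} The delicate point is the meaning of ``correct'' produced by $\mathcal{V}_1$. The query $\mathcal{V}(A,P,I)$ asserts $\varphi^*$ at $l$, which certifies that $\varphi^*$ holds on all reachable states at $l$. That is a fact about reachability, not a syntactic consecution check. I would handle this by adopting the paper's convention, stated in the preliminaries and in the Correctness characteristic, that correctness means $\mathcal{V}(A,P,I)=\textsc{True}$. I would also note that soundness of the decomposition is inherited from \citet{wei2025invbench}. Step 1 is then the only place requiring care: one must check that no branch of Algorithm~\ref{alg:simplify}, in particular the fallback to $\varphi_{\text{norm}}$, emits a predicate without grading it.
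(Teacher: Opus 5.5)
Your proposal is correct and takes the same route as the paper. The paper's one-line proof is exactly your observation that no predicate is emitted unless the grader returned $\mathcal{V}_1=\mathcal{V}_2=\textsc{True}$; you additionally spell out the branch-by-branch trace of Algorithm~\ref{alg:simplify} (including the $\varphi_{\text{norm}}$ fallback) and the oracle-soundness step that combines the two valid queries into validity of $\langle A,P,q\rangle$, both of which the paper's ``by construction'' leaves implicit.
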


\begin{proof}
By construction, $\varphi^*$ is returned only if it passes the formal verifier's correctness check ($\mathcal{V}_1 = \textsc{True}$) and sufficiency check ($\mathcal{V}_2 = \textsc{True}$).
\end{proof}
\section{Experimental Setup}

\textbf{Training Dataset.}
We fine-tune on raw verifier-generated invariants extracted from training programs released by~\citet{wei2025invbench}.
We run UAutomizer to verify each program and collect loop invariants from its output; we denote this raw collection~\textit{V0}.
AST-based normalization yields~\textit{V1}, and LLM-driven simplification (Kimi K2 Thinking~\citep{team2025kimi}, $N{=}4$ candidates per verbose invariant, where $\eta=20$ characters) followed by verifier filtering yields~\textit{V2}.
Retaining only $g\geq 2$ candidates for fine-tuning yields 7{,}284 samples, partitioned 80/20 into train and validation.
Full pipeline yield and dataset statistics are in Appendix~\ref{app:training-data-statistics}.

\newpage
\textbf{Evaluation Dataset.} For evaluation, we use InvBench, a benchmark of C verification queries derived from SV-COMP~\citep{beyer2025improvements}, released by~\citet{wei2025invbench}. Starting from 219 programs, we expand 
multi-loop programs into per-loop instances, yielding 362 total. We partition these into  \emph{Easy} ($n=239$, 66\%) and \emph{Hard} ($n=123$, 34\%) using a 15-second 
UAutomizer baseline threshold, focusing on Hard instances where invariants provide 
meaningful speedup; 20 Hard instances time out (Appendix~\ref{app:baseline-hard-cases}).  Each instance in $\mathcal{D} = \{(p_i, P_i, q_i, l_i, t_b^{(i)})\}_{i=1}^n$  comprises a precondition $p_i$, program $P_i$, postcondition $q_i$, loop location  $l_i \in \mathcal{L}(P_i)$, and median baseline time $t_b^{(i)}$ over $k=3$ runs. Models are prompted with a structured JSON format (Appendix~\ref{app:prompt-eval}); syntactically invalid, ill-formed, or outputs using side-effect operators (e.g., $++, +=, =$) receive verdict \textsc{Unknown}.
UAutomizer configuration, and hardware used for evaluation are detailed in Appendix~\ref{app:uautomizer_config}.

\textbf{Models.} We evaluate three open model families. From \emph{Qwen3}~\citep{Yang2025Qwen3TR} we use Qwen3-0.6B, Qwen3-4B-Instruct-2507, Qwen3-8B, and Qwen3-14B (referred to as \emph{Qwen3-0.6B/4B/8B/14B}); we also use \emph{Llama-3.1-8B-Instruct}~\citep{grattafiori2024llama3} and \emph{Mistral-7B-Instruct-v0.3}~\citep{jiang2023mistral}, referred to as \emph{Llama-3.1-8B} and \emph{Mistral-7B}. All models are fully fine-tuned except Qwen3-8B, which uses LoRA~\citep{Hu2022LoRALA} (Appendix~\ref{app:training-details}). As off-the-shelf baselines, we compare against Qwen3-Next-80B-A3B-Instruct (referred to as \emph{Qwen3-80B}), GPT-OSS-120B~\citep{agarwal2025gpt}, and GPT-5.2~\citep{openai2025gpt52}. The Qwen3-0.6B, 8B, and 14B models are trained and evaluated in \emph{non-thinking mode}.

\textbf{Decision Procedure.}
For valid candidates, we apply the procedure from \S\ref{sec:verification_procedure}, executing two queries \emph{in parallel}:
\begin{align*}
\mathcal{V}_1^{(i)} &= \mathcal{V}(\{p_i\},\, P_i,\, (l_i, \hat{\varphi}_i)) && \text{(Correctness Check)} \\
\mathcal{V}_2^{(i)} &= \mathcal{V}(\{p_i, (l_i,\, \hat{\varphi}_i)\}, P_{i}, q_i) && \text{(Sufficiency Check)}
\end{align*}
with wall-clock times $t_1^{(i)}$ and $t_2^{(i)}$, respectively.
Since both queries execute in parallel, the verification time is $t_v^{(i)} = \max(t_1^{(i)}, t_2^{(i)})$.
The outcome $D_i$ is determined by Table~\ref{tab:decision}.

\textbf{Evaluation Metrics.}
We define binary indicators for each instance $i$:
\begin{enumerate}
    \item $\textsc{Valid}(i) = 1$ iff syntactic validation passes;
    \item $\textsc{Correct}(i) = 1$ iff $\textsc{Valid}(i)$ and $\mathcal{V}_1^{(i)} = \textsc{True}$;
    \item $\textsc{Speedup}(i) = 1$ iff $\textsc{Correct}(i)$,\\  $D_i \in \{\textsc{True}, \textsc{False}\}$, and $t_v^{(i)} < t_b^{(i)}$.
\end{enumerate}

We report the mean of each indicator across the evaluation set. We also define the speedup factor:
\[
S(i) = \begin{cases} 
t_b^{(i)} / t_v^{(i)} & \text{if } \textsc{Correct}(i)~\land \\
& \quad D_i \in \{\textsc{True}, \textsc{False}\} \\ 
1 & \text{otherwise} 
\end{cases}
\]

We report $\bar{S}_{>1}$ as the mean speedup factor among instances with $\textsc{Speedup}(i) = 1$.

\textbf{Virtual Best Solver.}
The verification procedure described in Section~\ref{sec:verification_procedure} is designed to run as part of a portfolio, in parallel to direct verification: the LLM proposes an invariant while the baseline verifier runs concurrently, and whichever finishes first determines the outcome. The \emph{Virtual Best Solver (VBS)} metric captures this by selecting the faster strategy per instance (cf.\ Figure~\ref{fig:vbs}):
\[
\text{VBS}(i) = \begin{cases} 
\min(t_v^{(i)}, t_b^{(i)}) & \text{if } \textsc{Correct}(i)~\land \\ 
&\quad D_i \in \{\textsc{True}, \textsc{False}\}\\
t_b^{(i)} & \text{otherwise} 
\end{cases}
\]

The \emph{Virtual Best Performance} $\text{VBP} = \frac{1}{n}\sum_i \text{VBS}(i)$ measures the average verification time achievable by optimally combining LLM invariants with baseline direct verification.

\textbf{Note on Model Latency.}
Our primary metrics ($R_{\mathrm{speedup}}$, $\bar{S}_{>1}$, and VBP) exclude inference latency $t_m^{(i)}$ to isolate model capability from deployment factors. For end-to-end cost, we additionally report $\text{VBP}_{\text{E2E}}$, which adds $t_m^{(i)}$ to $t_v^{(i)}$ for sufficient instances.
: $\text{VBS}_{\text{E2E}}(i) = \min(t_v^{(i)} + t_m^{(i)},\, t_b^{(i)})$.

\begin{table*}[ht]
\centering
\small
\setlength{\tabcolsep}{0.55em}
\caption{Main results on the Hard instances ($n=123$). Results shown as mean $\pm$ std.\ across three runs. $R_{\text{valid/correct/speedup}}$: indicator rates (\%). $\bar{S}_{>1}$: mean speedup among accelerated instances. VBP: Virtual Best Performance in seconds (verifier-only baseline VBP: 193s). Solved: baseline timeouts (of 20) resolved per run. \textbf{Bold}: indicates best per model scale.}
\label{tab:results-main}
\begin{tabular}{l | r r r r | r r r}
\toprule
\emph{Model} & $R_{\text{valid}}\ (\%)$ & $R_{\text{correct}}\ (\%)$ & $R_{\text{speedup}}\ (\%)$ & $\bar{S}_{>1}$\ (x) & VBP $\downarrow$\ (s) & $\text{VBP}_{\text{E2E}}$ $\downarrow$\ (s) & Solved \\
\midrule
GPT-5.2         & 94.0$\pm$1.7 & 72.4$\pm$2.2 & 37.1$\pm$1.2 & 10.7$\pm$0.4 & 155.6$\pm$3.0 & 163.4$\pm$3.0 & 3, 2, 3 \\
GPT-OSS-120B    & 92.1$\pm$1.2 & 58.0$\pm$1.2 & 27.4$\pm$2.9 &  7.0$\pm$1.4 & 165.8$\pm$5.6 & 167.6$\pm$5.7 & 3, 2, 1 \\
Qwen3-80B       & 97.8$\pm$0.5 & 38.8$\pm$1.7 & 21.4$\pm$1.2 &  9.5$\pm$1.0 & 169.5$\pm$2.1 & 169.7$\pm$2.1 & 4, 3, 3 \\
\midrule
Qwen3-14B       & 96.5$\pm$0.5 & 36.3$\pm$1.9 & 13.6$\pm$2.0 &  7.6$\pm$0.9 & 183.2$\pm$1.2 & 183.6$\pm$1.1 & 1, 2, 1 \\
Qwen3-14B-V2 (Ours) & \textbf{100.0$\pm$0.0} & \textbf{43.4$\pm$4.9} & \textbf{18.4$\pm$4.2} & \textbf{16.0$\pm$3.3} & \textbf{162.1$\pm$8.3} & \textbf{162.9$\pm$8.1} & \textbf{4, 4, 2} \\
\midrule
Qwen3-8B (Base) & 89.4$\pm$7.8 & 23.8$\pm$3.1 & 10.8$\pm$0.5 &  8.5$\pm$5.2 & 181.6$\pm$4.3 & 181.7$\pm$4.2 & 0, 0, 3 \\
Qwen3-8B-V2 (Ours) & \textbf{100.0$\pm$0.0} & \textbf{42.8$\pm$4.6} & \textbf{21.7$\pm$1.7} & \textbf{10.7$\pm$2.3} & \textbf{166.5$\pm$4.3} & \textbf{166.7$\pm$4.3} & \textbf{2, 1, 4} \\
\midrule
Qwen3-4B (Base) & 99.2$\pm$0.0 & 22.8$\pm$2.2 & 11.1$\pm$0.9 &  8.9$\pm$2.5 & 185.6$\pm$2.4 & 185.7$\pm$2.4 & 1, 0, 1 \\
Qwen3-4B-V2 (Ours) & \textbf{100.0$\pm$0.0} & \textbf{44.4$\pm$2.3} & \textbf{24.7$\pm$1.2} & \textbf{12.4$\pm$2.2} & \textbf{165.5$\pm$3.2} & \textbf{165.7$\pm$3.2} & \textbf{3, 2, 2} \\
\midrule
Qwen3-0.6B (Base) & 88.3$\pm$0.5 & \textbf{28.5$\pm$2.8} & 12.2$\pm$2.2 &  5.3$\pm$3.3 & 182.9$\pm$5.7 & 183.0$\pm$5.7 & 2, 0, 1 \\
Qwen3-0.6B-V2 (Ours) & \textbf{99.7$\pm$0.5} & 27.9$\pm$0.5 & \textbf{14.1$\pm$2.5} & \textbf{8.5$\pm$3.1} & \textbf{174.0$\pm$5.6} & \textbf{174.1$\pm$5.6} & \textbf{2, 2, 1} \\
\midrule
Llama3.1-8B (Base) & 96.2$\pm$1.2 & 14.9$\pm$1.7 &  5.4$\pm$2.0 &  3.7$\pm$1.4 & 186.3$\pm$5.1 & 186.4$\pm$5.1 & 1, 0, 2 \\
Llama3.1-8B-V2 (Ours) & \textbf{99.7$\pm$0.5} & \textbf{45.5$\pm$1.4} & \textbf{18.4$\pm$2.5} & \textbf{11.3$\pm$2.1} & \textbf{168.9$\pm$7.6} & \textbf{169.2$\pm$7.5} & \textbf{3, 2, 4} \\
\midrule
Mistral-7B (Base) & 93.8$\pm$0.9 & 18.4$\pm$2.0 &  7.3$\pm$1.4 &  6.7$\pm$3.6 & 186.7$\pm$1.5 & 186.9$\pm$1.5 & 0, 0, 0 \\
Mistral-7B-V2 (Ours) & \textbf{99.5$\pm$0.9} & \textbf{31.4$\pm$0.5} & \textbf{16.0$\pm$0.5} & \textbf{17.0$\pm$0.9} & \textbf{169.0$\pm$4.3} & \textbf{169.4$\pm$4.3} & \textbf{2, 1, 4} \\
\bottomrule
\end{tabular}
\end{table*}

\begin{table*}[ht]
\centering
\small
\setlength{\tabcolsep}{0.55em}
\caption{\textsc{Wonda} ablation study on the hard split ($n{=}123$; mean $\pm$ std.\ over three runs).
\emph{V0}: raw UAutomizer invariants; \emph{V1}: AST-normalized; \emph{V2}: full pipeline.
\textbf{Bold}: best per model family.}
\label{tab:ablation-mega}
\begin{tabular}{l | r r r r | r r}
\toprule
\emph{Model} & $R_{\text{valid}}\ (\%)$ & $R_{\text{correct}}\ (\%)$ & $R_{\text{speedup}}\ (\%)$ & $\bar{S}_{>1}$\ (x) & VBP $\downarrow$\ (s) & $\text{VBP}_{\text{E2E}}$ $\downarrow$\ (s) \\
\midrule
Qwen3-8B (Base) & 89.4$\pm$7.8 & 23.8$\pm$3.1 & 10.8$\pm$0.5 & 8.5$\pm$5.2 & 181.6$\pm$4.3 & 181.7$\pm$4.2 \\
Qwen3-8B-V0 & 88.1$\pm$3.9 & 29.9$\pm$3.4 & 11.5$\pm$1.9 & 9.4$\pm$2.6 & 180.0$\pm$2.7 & 180.6$\pm$2.7 \\
Qwen3-8B-V1 & 97.0$\pm$0.9 & 30.1$\pm$0.8 & 13.0$\pm$2.2 & 9.1$\pm$1.9 & 175.3$\pm$3.2 & 175.5$\pm$3.2 \\
Qwen3-8B-V2 (Ours) & \textbf{100.0$\pm$0.0} & \textbf{42.8$\pm$4.6} & \textbf{21.7$\pm$1.7} & \textbf{10.7$\pm$2.3} & \textbf{166.5$\pm$4.3} & \textbf{166.7$\pm$4.3} \\
\midrule
Qwen3-4B (Base) & 99.2$\pm$0.0 & 22.8$\pm$2.2 & 11.1$\pm$0.9 & 8.9$\pm$2.5 & 185.6$\pm$2.4 & 185.7$\pm$2.4 \\
Qwen3-4B-V0 & 81.3$\pm$0.8 & 29.3$\pm$3.5 & 13.6$\pm$1.9 & 10.1$\pm$1.5 & 177.5$\pm$1.5 & 177.7$\pm$1.5 \\
Qwen3-4B-V1 & 97.6$\pm$1.4 & 33.1$\pm$2.3 & 12.7$\pm$2.3 & 11.4$\pm$2.9 & 174.2$\pm$4.7 & 174.4$\pm$4.7 \\
Qwen3-4B-V2 (Ours) & \textbf{100.0$\pm$0.0} & \textbf{44.4$\pm$2.3} & \textbf{24.7$\pm$1.2} & \textbf{12.4$\pm$2.2} & \textbf{165.5$\pm$3.2} & \textbf{165.7$\pm$3.2} \\
\midrule
Qwen3-0.6B (Base) & 88.3$\pm$0.5 & \textbf{28.5$\pm$2.8} & 12.2$\pm$2.2 & 5.3$\pm$3.3 & 182.9$\pm$5.7 & 183.0$\pm$5.7 \\
Qwen3-0.6B-V0 & 85.9$\pm$2.6 & 18.7$\pm$0.8 & 8.9$\pm$0.8 & 11.7$\pm$9.4 & 178.0$\pm$2.7 & 178.1$\pm$2.7 \\
Qwen3-0.6B-V1 & 97.6$\pm$1.4 & 23.3$\pm$1.2 & 9.8$\pm$2.2 & \textbf{15.3$\pm$0.9} & 174.4$\pm$4.4 & 174.5$\pm$4.4 \\
Qwen3-0.6B-V2 (Ours) & \textbf{99.7$\pm$0.5} & 27.9$\pm$0.5 & \textbf{14.1$\pm$2.5} & 8.5$\pm$3.1 & \textbf{174.0$\pm$5.6} & \textbf{174.1$\pm$5.6} \\
\midrule
Llama3.1-8B (Base) & 96.2$\pm$1.2 & 14.9$\pm$1.7 & 5.4$\pm$2.0 & 3.7$\pm$1.4 & 186.3$\pm$5.1 & 186.4$\pm$5.1 \\
Llama3.1-8B-V0 & 88.9$\pm$0.5 & 31.2$\pm$2.0 & 14.9$\pm$0.9 & 9.3$\pm$3.1 & 175.0$\pm$2.5 & 175.4$\pm$2.5 \\
Llama3.1-8B-V1 & 99.7$\pm$0.5 & 36.3$\pm$5.4 & 14.1$\pm$3.7 & \textbf{15.3$\pm$3.4} & 170.0$\pm$3.5 & 170.4$\pm$3.5 \\
Llama3.1-8B-V2 (Ours) & \textbf{99.7$\pm$0.5} & \textbf{45.5$\pm$1.4} & \textbf{18.4$\pm$2.5} & 11.3$\pm$2.1 & \textbf{168.9$\pm$7.6} & \textbf{169.2$\pm$7.5} \\
\midrule
Mistral-7B (Base)      & 93.8$\pm$0.9 & 18.4$\pm$2.0 & 7.3$\pm$1.4 & 6.7$\pm$3.6 & 186.7$\pm$1.5 & 186.9$\pm$1.5 \\
Mistral-7B-V0          & 68.8$\pm$2.0 & 17.1$\pm$4.3 & 6.5$\pm$2.8 & 16.8$\pm$2.3 & 179.1$\pm$7.7 & 179.4$\pm$7.7 \\
Mistral-7B-V1          & 95.9$\pm$1.6 & 24.9$\pm$0.9 & 10.0$\pm$3.8 & 14.3$\pm$2.8 & 175.4$\pm$6.4 & 175.9$\pm$6.2 \\
Mistral-7B-V2 (Ours)   & \textbf{99.5$\pm$0.9} & \textbf{31.4$\pm$0.5} & \textbf{16.0$\pm$0.5} & \textbf{17.0$\pm$0.9} & \textbf{169.0$\pm$4.3} & \textbf{169.4$\pm$4.3} \\
\bottomrule
\end{tabular}
\end{table*}

\section{Results and Analysis}
\paragraph{Benefits of \textsc{Wonda}.}

Table~\ref{tab:results-main} depicts our main results on the hard split. Across all open-model scales, V2 curation significantly boosts performance relative to the corresponding base. \emph{Qwen3-4B-V2} more than doubles its base model's speedup rate and nearly doubles its correctness, with $\mathrm{VBP}_{\mathrm{E2E}}$ falling from $185.7\,s$ to $165.7\,s$. \emph{Qwen3-8B-V2} shows the same pattern, \emph{Llama-3.1-8B-V2} triples base correctness ($14.9\% \to 45.5\%$), and \emph{Mistral-7B-V2} more than doubles its speedup rate ($7.3\% \to 16.0\%$). \emph{Qwen3-14B-V2} achieves the best VBP among our models ($162.1\,s$), matching GPT-5.2 on $\mathrm{VBP}_{\mathrm{E2E}}$ ($162.9\,s$ vs.\ $163.4\,s$). VBP improves by up to 16.0\% (${\sim}31\,s$) over direct verification alone across all families. All fine-tuned Qwen3 models at 4B parameters and above, as well as \emph{Llama-3.1-8B-V2}, outperform the off-the-shelf \emph{Qwen3-80B} on both correctness and $\mathrm{VBP}_{\mathrm{E2E}}$, with \emph{Qwen3-4B-V2} the most notable given it is ${\sim}20\times$ smaller; it also matches GPT-OSS-120B on $\mathrm{VBP}_{\mathrm{E2E}}$ ($165.7\,s$ vs.\ $167.6\,s$). Figure~\ref{fig:pareto-linked-e2e-accessible} summarizes these comparisons visually.

\begin{figure}[H]
  \centering
  \includegraphics[width=\columnwidth]{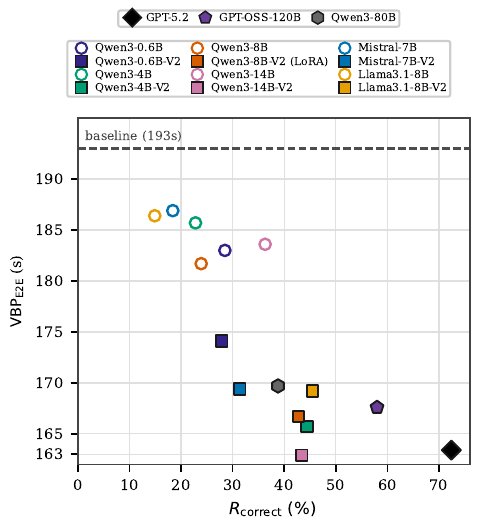}
  \caption{%
    \textbf{Correctness vs.\ $\mathrm{VBP}_{\mathrm{E2E}}$} on hard split ($n{=}123$; mean over three runs).
    Dashed line: baseline VBP (193\,s); better is bottom-right.
    V2 improves both axes across most of the models; \emph{Qwen3-4/8B-V2} match \emph{GPT-OSS-120B} on $\mathrm{VBP}_{\mathrm{E2E}}$, with \emph{Qwen3-14B-V2} matching \emph{GPT-5.2}, despite lower correctness rate.
  }
  \label{fig:pareto-linked-e2e-accessible}
\end{figure}

\paragraph{Invariant Correctness vs.\ Verification Speedup.}
Figure~\ref{fig:pareto-linked-e2e-accessible} summarizes the joint improvement in correctness and $\mathrm{VBP}_{\mathrm{E2E}}$ across families: in most cases, V2 (squares) improves over base (circles) on both axes.
However, a correct invariant need not accelerate verification; Table~\ref{tab:results-main} separates $R_{\mathrm{correct}}$ from $R_{\mathrm{speedup}}$ and $\mathrm{VBP}_{\mathrm{E2E}}$ for this reason.
\emph{Qwen3-0.6B} illustrates the decoupling most clearly: base and V2 achieve similar $R_{\mathrm{correct}}$ ($28.5\%$ vs.\ $27.9\%$), yet V2 yields higher $R_{\mathrm{speedup}}$ ($14.1\%$ vs.\ $12.2\%$), larger $\bar{S}_{>1}$ ($8.5\times$ vs.\ $5.3\times$), and lower $\mathrm{VBP}_{\mathrm{E2E}}$ ($174.1\,s$ vs.\ $183.0\,s$). \textsc{Wonda} targets this gap by curating invariants that are not only correct but also practically beneficial to the verifier.

\paragraph{Timeouts remain a challenge.}
Resolving baseline timeouts remains an open problem: even the strongest models clear only a small fraction of the 20 hard-split instances that timed out (Table~\ref{tab:results-main}, \emph{Solved}). Still, V2 makes measurable progress. \emph{Qwen3-14B-V2} averages ${\sim}3.3$ resolved per run, the highest among all models and above GPT-5.2 (${\sim}2.7$); \emph{Llama-3.1-8B-V2} also exceeds GPT-5.2 at ${\sim}3.0$. \emph{Mistral-7B} base stays at $0$, while \emph{Mistral-7B-V2} averages ${\sim}2.3$ (up to $4$). The consistent base$\to$V2 improvement across families suggests that \textsc{Wonda} teaches invariants that help the verifier resolve previously intractable instances.

\textbf{Ablation Analysis.}
Table~\ref{tab:ablation-mega} traces \textit{V0}$\to$\textit{V1}$\to$\textit{V2} on Qwen3-4B/8B, Llama-3.1-8B, and Mistral-7B.
\textit{V0} fine-tuning on raw verifier-generated invariants often \emph{lowers} $R_{\mathrm{valid}}$ (e.g., Qwen3-4B $99.2\% \to 81.3\%$; Mistral-7B $93.8\% \to 65.0\%$); \textit{V1} normalization restores syntax but yields only modest gains. \textit{V2} brings the largest improvements across all families: correctness and speedup rates nearly double on Qwen3-4B and 8B, Llama-3.1-8B-V2 reaches the highest $R_{\mathrm{correct}}$ in the table ($45.5\%$), and Mistral-7B recovers from the V0 drop. Only the full \textsc{Wonda} pipeline consistently delivers both syntactic reliability and meaningful verification speedup.
This trend is illustrated concretely in Figure~\ref{fig:inference_outputs_full} in Appendix~\ref{app:pip_examples} on a concrete example, where UAutomizer itself, the base model, and the V0/V1 stages produce verbose or incorrect invariants, while V2 discovers a compact, correct, and sufficient invariant yielding a $39.75\times$ end-to-end speedup.

\begin{table}[ht]
\centering
\small
\setlength{\tabcolsep}{0.8em}
\caption{Ablation results on the Easy Split ($n = 239$). \textbf{Bold}: best per model family.}

\label{tab:ablation-easy}
\begin{tabular}{l | r r}
\toprule
\emph{Model} & $R_{\text{valid}}\ (\%)$ & $R_{\text{correct}}\ (\%)$ \\
\midrule
Qwen3-4B (Base) & 100.0$\pm$0.0 & 42.3$\pm$1.1 \\
Qwen3-4B-V0 & 85.1$\pm$2.5 & 26.9$\pm$3.8 \\
Qwen3-4B-V1 & 99.0$\pm$0.6 & 34.5$\pm$1.5 \\
Qwen3-4B-V2 (Ours) & \textbf{100.0$\pm$0.0} & \textbf{50.8$\pm$0.9} \\
\midrule
Mistral-7B (Base) & 96.5$\pm$0.6 & 25.8$\pm$3.1 \\
Mistral-7B-V0 & 78.0$\pm$1.7 & 19.5$\pm$0.4 \\
Mistral-7B-V1 & 98.2$\pm$0.2 & 34.6$\pm$1.3 \\
Mistral-7B-V2 (Ours) & \textbf{99.7$\pm$0.2} & \textbf{41.7$\pm$1.7} \\
\midrule
Llama3.1-8B (Base) & 99.2$\pm$0.4 & 13.4$\pm$0.4 \\
Llama3.1-8B-V0 & 91.1$\pm$1.5 & 31.2$\pm$2.7 \\
Llama3.1-8B-V1 & 99.7$\pm$0.2 & 41.3$\pm$1.7 \\
Llama3.1-8B-V2 (Ours) & \textbf{99.7$\pm$0.2} & \textbf{57.6$\pm$2.1} \\
\bottomrule
\end{tabular}
\end{table}

\paragraph{Easy split.}
Table~\ref{tab:ablation-easy} shows easy-split results. V2 lifts correctness across all families, with \emph{Llama-3.1-8B-V2} reaching $57.6\%$ and \emph{Qwen3-4B-V2} reaching $50.8\%$. The V0 drop and V1 partial recovery pattern is consistent across all three families, matching the hard split. We report only validation and correctness here as the average direct verification time is already very short (${\sim}6.15\,s$) on this split.

\section{Limitations \& Future Work}
\textsc{Wonda} currently relies on UAutomizer as its sole source of training invariants, limiting coverage to solver-verifiable programs and risking inheriting the solver's biases. Extending \textsc{Wonda} to additional backend verifiers is a natural next step. Beyond the training data itself, our current setup neither exposes intermediate reasoning steps nor leverages verifier feedback. Promising directions include chain-of-thought supervision, reinforcement learning driven by verification outcomes, and integrating \textsc{Wonda}-trained models into iterative, counterexample-guided refinement loops.
\section{Conclusion}
We introduced \textsc{Wonda}, a novel data curation pipeline that transforms raw verifier-generated invariants into compact, high-quality training signals with provable quality guarantees. Our results show that data quality, not model scale alone, is a key bottleneck for neural invariant generation: fine-tuning small models on \textsc{Wonda}-curated data substantially improves both invariant correctness and verification speedup across the Qwen3, Llama-3.1, and Mistral families. Most notably, a 4B model surpasses a 20$\times$ larger model, and our best 14B model matches frontier models such as GPT-5.2 on end-to-end verification time, without test-time compute overhead. These findings suggest that careful data curation is a practical path to integrating small language models into traditional verifiers, making program verification faster and more accessible.

\section*{Impact Statement}
This work contributes to our understanding of how to curate training data for improving model performance on logical reasoning tasks. In addition, this work helps make formal verification more practical in high-stakes domains where the reliability of the software system is important. We do not anticipate significant negative societal impacts arising from this work.
\section*{Acknowledgments}
The work of Pinto, Elboher and Katz was partially funded by the European Union (RobustifAI project, ID 101212818). Views and opinions expressed are however those of the author(s) only and do not necessarily reflect those of the European Union or the European Health and Digital Executive Agency (HADEA). Neither the European Union nor the granting authority can be held responsible for them. The work of Wu is partially supported by a gift from the VMware University Research Fund.
\input{tikz_figures/standalone}
\bibliographystyle{style/icml2026}
\bibliography{refs}

\onecolumn
\appendix
\section*{Appendix} 

\section{Additional Examples}\label{app:pip_examples}
\input{tikz_figures/eval_figure}
\definecolor{kw}{RGB}{160, 32, 160}     
\definecolor{op}{RGB}{180, 60, 60}      
\definecolor{num}{RGB}{20, 100, 180}    
\definecolor{id}{RGB}{40, 90, 40}       

\begin{figure}[H]
\centering
\begin{tikzpicture}[
    codebox/.style={draw, rounded corners=4pt, font=\ttfamily\tiny, align=left, fill=#1, inner sep=5pt},
    codebox/.default=white,
    llmbox/.style={draw, rounded corners=5pt, fill=violet!15, font=\scriptsize\bfseries, minimum height=0.6cm},
    lbl/.style={font=\scriptsize\bfseries},
    sublbl/.style={font=\tiny, text=gray},
    arrowlbl/.style={font=\tiny\itshape, fill=white, inner sep=2pt},
]

\node[codebox=orange!12, text width=4.2cm] (v0) {
    ((((((((\textcolor{num}{7} \textcolor{op}{<=} \textcolor{id}{i}) \textcolor{op}{\&\&} (\textcolor{id}{N} \textcolor{op}{<=} \textcolor{num}{10}))\\
    \quad \textcolor{op}{||} ((\textcolor{num}{2} \textcolor{op}{==} \textcolor{id}{i}) \textcolor{op}{\&\&} (\textcolor{id}{N} \textcolor{op}{<=} \textcolor{num}{10})))\\
    \quad \textcolor{op}{||} ((\textcolor{num}{5} \textcolor{op}{==} \textcolor{id}{i}) \textcolor{op}{\&\&} (\textcolor{id}{N} \textcolor{op}{<=} \textcolor{num}{10})))\\
    \quad \textcolor{op}{||} ((\textcolor{id}{i} \textcolor{op}{==} \textcolor{num}{1}) \textcolor{op}{\&\&} (\textcolor{id}{N} \textcolor{op}{<=} \textcolor{num}{10})))\\
    \quad \textcolor{op}{||} ((\textcolor{num}{3} \textcolor{op}{==} \textcolor{id}{i}) \textcolor{op}{\&\&} (\textcolor{id}{N} \textcolor{op}{<=} \textcolor{num}{10})))\\
    \quad \textcolor{op}{||} ((\textcolor{id}{i} \textcolor{op}{==} \textcolor{num}{4}) \textcolor{op}{\&\&} (\textcolor{id}{N} \textcolor{op}{<=} \textcolor{num}{10})))\\
    \quad \textcolor{op}{||} ((\textcolor{num}{6} \textcolor{op}{<=} \textcolor{id}{i}) \textcolor{op}{\&\&} (\textcolor{id}{N} \textcolor{op}{<=} \textcolor{num}{10})))
};
\node[lbl, above=0.1cm of v0] {$V_0$: Raw Verifier Output};
\node[sublbl, below=0.1cm of v0] {239 chars, 7 disjuncts};

\node[codebox=yellow!12, text width=3.8cm, right=1.2cm of v0] (v1) {
    \textcolor{num}{7} \textcolor{op}{<=} \textcolor{id}{i} \textcolor{op}{\&\&} \textcolor{id}{N} \textcolor{op}{<=} \textcolor{num}{10}\\
    \textcolor{op}{||} \textcolor{num}{2} \textcolor{op}{==} \textcolor{id}{i} \textcolor{op}{\&\&} \textcolor{id}{N} \textcolor{op}{<=} \textcolor{num}{10}\\
    \textcolor{op}{||} \textcolor{num}{5} \textcolor{op}{==} \textcolor{id}{i} \textcolor{op}{\&\&} \textcolor{id}{N} \textcolor{op}{<=} \textcolor{num}{10}\\
    \textcolor{op}{||} \textcolor{id}{i} \textcolor{op}{==} \textcolor{num}{1} \textcolor{op}{\&\&} \textcolor{id}{N} \textcolor{op}{<=} \textcolor{num}{10}\\
    \textcolor{op}{||} \textcolor{num}{3} \textcolor{op}{==} \textcolor{id}{i} \textcolor{op}{\&\&} \textcolor{id}{N} \textcolor{op}{<=} \textcolor{num}{10}\\
    \textcolor{op}{||} \textcolor{id}{i} \textcolor{op}{==} \textcolor{num}{4} \textcolor{op}{\&\&} \textcolor{id}{N} \textcolor{op}{<=} \textcolor{num}{10}\\
    \textcolor{op}{||} \textcolor{num}{6} \textcolor{op}{<=} \textcolor{id}{i} \textcolor{op}{\&\&} \textcolor{id}{N} \textcolor{op}{<=} \textcolor{num}{10}
};
\node[lbl, above=0.1cm of v1] {$V_1$: Normalized};
\node[sublbl, below=0.1cm of v1] {147 chars, 7 disjuncts};

\node[llmbox, right=1.2cm of v1] (llm) {LLM};

\node[codebox=green!12, right=1.2cm of llm, minimum height=1.5cm, text width=3.2cm, align=center, font=\ttfamily\small] (v2) {
    \textcolor{num}{1} \textcolor{op}{<=} \textcolor{id}{i} \textcolor{op}{\&\&} \textcolor{id}{i} \textcolor{op}{<=} \textcolor{id}{N} \textcolor{op}{\&\&} \textcolor{id}{N} \textcolor{op}{<=} \textcolor{num}{10}
};
\node[lbl, above=0.1cm of v2] {$V_2$: Simplified};
\node[sublbl, below=0.1cm of v2] {30 chars, Grade 3};

\draw[->,  >=stealth, thick, gray] (v0.east) -- (v1.west) node[arrowlbl, midway, above] {normalize};
\draw[->,   >=stealth, thick, gray] (v1.east) -- (llm.west);
\draw[->,  >=stealth, thick, gray] (llm.east) -- (v2.west) node[arrowlbl, midway, above] {simplify};

\end{tikzpicture}
\caption{\textsc{Wonda} pipeline. The LLM factors out the common constraint \texttt{N <= 10} and reduces case enumeration into a compact range, achieving a \textbf{3.47x} verification speedup.}
\label{fig:pipeline_example2}
\end{figure}
\newpage
\section{Prompts}
\label{app:prompt}
\subsection{Prompt for training and evaluation}
\label{app:prompt-eval}
We provide here the system and user prompts for loop invariant generation task, used for both training and evaluation.
\begin{lstlisting}[
    basicstyle=\ttfamily\small, % Typewriter font, small size
    breaklines=true,            % TELLS LATEX TO WRAP LONG LINES
    breakatwhitespace=true,     % Only break at spaces, not mid-word
    columns=fullflexible,       % Tightens character spacing
    frame=single,               % Adds a box around the prompt
    rulecolor=\color{black!30}, % Light gray frame border
    backgroundcolor=\color{gray!5}, % Very light gray background
    showstringspaces=false      % Don't show spaces as underscores
]
System:
    You are an expert C programmer and highly proficient 
    in generating strong loop invariants
    for C programs that accelerate traditional verifiers' verification process.

    ## Input format
    - A C program instrumented with loop markers of the form: 
      ```c
      INVARIANT_MARKER_k();  // appears at the *start of each loop body*
      ```
      - The program contains a single target property as an assertion of the form:
      ```c
      assert(<target_property>);
      ```
    - A target loop marker (e.g., "INVARIANT_MARKER_1")

    ## Task
    - Propose ONE loop invariant that is intended to hold specifically at the target loop marker.
    - The invariant should help prove the target property and be inductive if possible.

    ## Output format
    - Output MUST be a single JSON object on one line wrapped in ```json``` tags and nothing else.
    - The JSON MUST have exactly these keys:
      - "marker": MUST be exactly the target loop marker (e.g., "INVARIANT_MARKER_1")
      - "content": ONLY a valid C boolean expression for the invariant. 

    ## Output format example
    ```json
    {"marker":"<target_marker>","content":"<content>"}
    ```

User:
  ## User Input
  ### C Program
  ```c
  {program}
  ```
  ### Target Loop Marker
  {target_marker}
\end{lstlisting}
\newpage
\subsection{Prompt for Invariant Simplification}
\label{app:prompt-invariant-simplification}
We provide here the full system and user prompts used for the simplification step in the \textsc{Wonda} pipeline.
\begin{lstlisting}[
    basicstyle=\ttfamily\small,
    breaklines=true,            % WRAPS LONG LINES AUTOMATICALLY
    breakatwhitespace=true,
    columns=fullflexible,
    frame=single,
    rulecolor=\color{black!30},
    backgroundcolor=\color{gray!5},
    showstringspaces=false
]
System:
    ## Task
    Given the C program and the invariant, your task is to simplify the
    invariant to a more compact and general form.

    ## Output format
    - Output MUST be a single JSON object.
    - The JSON MUST have exactly these keys:
      - "simplified_invariant": A single compact, inductive, C boolean
        expression, nothing else.
      - "rationale": A short explanation of why you simplified the
        invariant to the given form.
    ## Output format example
    {"simplified_invariant":"<simplified_invariant>",
        "rationale":"<rationale>"}

    ## Guidelines
    - The simplified invariant should be logically weaker than (or
      equivalent to) the original, but still inductive and strong enough
      to prove the target property.
    - Prefer LINEAR arithmetic expressions (the verifier struggles with
      non-linear math like x*y)
    - Prefer mathematical relationships over case enumeration
    - Look for patterns across disjuncts (e.g., repeated structure with
      varying constants)
    - Generalize enumerated values to ranges (e.g., "i == 1 || i == 2
      || i == 3" -> "1 <= i && i <= 3")
    - Remove tautological constraints (e.g., "a == a", "n <= n",
      "0 <= 0", "a + 0 == a", "true", "1")
    - Remove constraints on constant variables (variables initialized
      but never modified in loops)
    - Replace redundant constraints with simpler equivalents (e.g.,
      "a <= b && b <= a" -> "a == b")
    - Ensure the simplified invariant is still inductive (holds before
      loop and preserved by each iteration)
    - Use the program context to understand variable semantics and
      loop structure
    - Use ONLY plain ASCII characters in your output (no Unicode symbols)

User:
    Simplify the following invariant for the given C program and marker.
    c_program:
    ```c
    {program}
    ```
    invariant:
    ```c
    {invariant}
    ```
    
    marker:
    ```c
    {marker}
```
\end{lstlisting}
\newpage


\section{Algorithms}\label{app:algorithms}
\begin{figure}[H]
\begin{minipage}[t]{0.48\textwidth}
\begin{algorithm}[H]
\caption{Candidate Invariant Grading}
\label{alg:verify}
\begin{algorithmic}[1]
\STATE {\bfseries Input:} Verification query $\langle A, P, q \rangle$, location $l$, candidate predicate $\varphi$, baseline time $t_b$
\STATE {\bfseries Output:} Quality grade $g \in \{0, 1, 2, 3\}$
\STATE
\IF{\textbf{not} $\textsc{SyntaxValid}(\varphi)$}
    \STATE {\bfseries return} $0$ \COMMENT{Invalid syntax}
\ENDIF
\STATE
\STATE $I \leftarrow (l, \varphi)$
\STATE \COMMENT{Parallel execution (cf.\ Figure~\ref{fig:vbs})}
\STATE $(\mathcal{V}_1, t_1) \leftarrow \mathcal{V}(A, P, I)$ \COMMENT{Correctness Check}
\STATE $(\mathcal{V}_2, t_2) \leftarrow \mathcal{V}(A \cup \{I\}, P, q)$ \COMMENT{Sufficiency Check}
\STATE $t_v \leftarrow \max(t_1, t_2)$ \COMMENT{Total wall-clock time}
\STATE
\IF{$\mathcal{V}_1 \ne \textsc{True}$}
    \STATE {\bfseries return} $0$ \COMMENT{Incorrect (not inductive)}
\ELSIF{$\mathcal{V}_2 \ne \textsc{True}$}
    \STATE {\bfseries return} $1$ \COMMENT{Correct but not sufficient}
\ELSIF{$t_v \ge t_b$}
    \STATE {\bfseries return} $2$ \COMMENT{Correct and sufficient, but no speedup}
\ELSE
    \STATE {\bfseries return} $3$ \COMMENT{Correct, sufficient, and provides speedup}
\ENDIF
\end{algorithmic}
\end{algorithm}
\end{minipage}
\hfill
\begin{minipage}[t]{0.48\textwidth}
\begin{algorithm}[H]
\caption{Invariant Simplification}
\label{alg:simplify}
\begin{algorithmic}[1]
\STATE {\bfseries Input:} Verification query $\langle A, P, q \rangle$, location $l$, normalized invariant $\varphi_{\text{norm}}$, baseline time $t_b$, $N$ candidates, minimum character length $\eta$
\STATE {\bfseries Output:} Set $R$ of qualifying simplified invariants with their corresponding grades.
\STATE $R \leftarrow \emptyset$
\IF{$\varphi_{\text{norm}} \in \{0, 1\}$}
    \STATE \textbf{return} $R$
\ENDIF
\IF{$|\varphi_{\text{norm}}| > \eta$} 
    \STATE $\mathbf{C} \leftarrow \textsc{LLM}(P, \varphi_{\text{norm}}, l, N)$
    \STATE $\mathbf{C} \leftarrow \textsc{Deduplicate}(\mathbf{C})$
    \FOR{\textbf{each} $\varphi \in \mathbf{C}$}
        \IF{$\varphi \in \{0, 1\}$}
            \STATE \textbf{continue} 
        \ENDIF
        \STATE $g \leftarrow \textsc{GradeCandidate}(\langle A, P, q \rangle, l, \varphi, t_b)$ \COMMENT{via Alg.~\ref{alg:verify}}
        \IF{$g \geq 2$}
            \STATE $R \leftarrow R \cup \{ (\varphi, g) \}$
        \ENDIF
    \ENDFOR
\ENDIF
\IF{$R = \emptyset$} 
    \STATE $g \leftarrow \textsc{GradeCandidate}(\langle A, P, q \rangle, l, \varphi_{\text{norm}}, t_b)$
    \IF{$g \geq 2$}
        \STATE $R \leftarrow \{ (\varphi_{\text{norm}}, g) \}$
    \ENDIF
\ENDIF
\STATE \textbf{return} $R$
\end{algorithmic}
\end{algorithm}
\end{minipage}
\end{figure}


\newpage
\section{Training Data Statistics}\label{app:training-data-statistics}
\paragraph{Pipeline Yield.}
We applied \textsc{Wonda} to 4{,}000 raw verifier-generated invariants. Of these, 3{,}932 (98.3\%) produced at least one accepted candidate ($G(\varphi) > 0$); only 68 (1.7\%) yielded an empty set. Of the 2{,}995 verbose invariants ($|\varphi_{\text{norm}}| \geq 20$), the LLM simplification stage generated 11{,}980 candidates ($N{=}4$), retaining 6{,}584 (55.0\%) with $G(\varphi) \geq 2$, along with 187 $G(\varphi){=}1$ and 39 normalized fallbacks. The remaining 1{,}005 compact invariants were verified as-is; 983 (97.8\%) yielded a $G(\varphi) \geq 2$ candidate. The pipeline produces 7{,}763 curated rows in total; restricting to $G(\varphi) \geq 2$ and dropping examples whose full prompt-completion sequence (after applying the chat template) exceeds 1{,}024 tokens gives the final \textit{V2} set of 7{,}284 samples.

\paragraph{Dataset Statistics.}
As shown in Figure~\ref{fig:train_stats}, the \textit{V2} set is partitioned 80/20 into 5{,}827 training and 1{,}457 validation samples, comprising two quality tiers: Correct and Sufficient ($G(\varphi)=2$, 4{,}516 samples) and Provides Speedup ($G(\varphi)=3$, 2{,}768 samples). The mean sequence length is 518 tokens; generated invariants are highly concise at 15.8 tokens on average. The $G(\varphi)=3$ subset achieves a mean speedup of 2.13$\times$ with peaks up to 41.39$\times$, demonstrating that \textsc{Wonda}-curated invariants offer significant computational advantages for the formal solver.

\begin{figure}[H]
    \centering
    \includegraphics[width=0.8\linewidth]{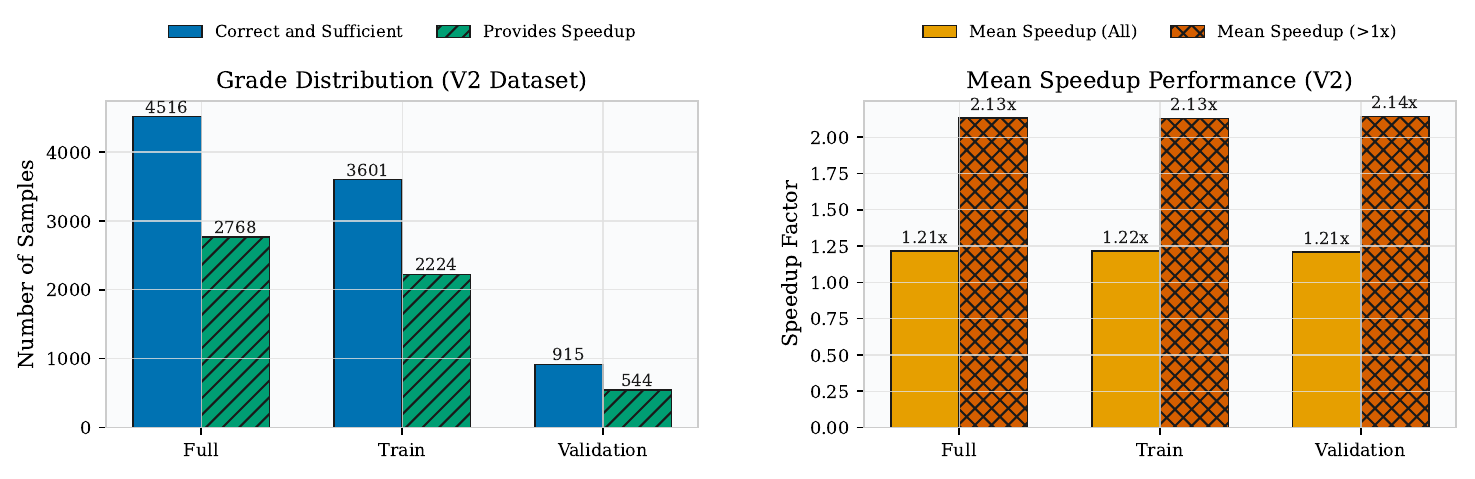}
    \caption{Statistical distribution of the \textit{V2} curated dataset across the 80/20 train-validation split.}
    \label{fig:train_stats}
\end{figure}

\section{Hard Split Evaluation}\label{app:baseline-hard-cases}
To quantify the difficulty of the hard cases, we evaluate the baseline verifier on this subset 3 times and use the median timing among them. \cref{fig:dataset-statistics-v2-hard} (Left) reports the verifier’s performance on the 123 selected hard instances. The verifier fails to solve 16.3\% of these cases (compared to 5.5\% across all instances), confirming that these represent substantially more challenging verification problems. 
\cref{fig:dataset-statistics-v2-hard} (Right) shows the runtime distribution of the baseline verifier on the 123 hard cases. Execution times range from 15.5\,s to the 600\,s timeout, with a median of 112.2\,s and a mean of 193.0\,s.

\begin{figure}[H]
  \centering
  \includegraphics[width=\textwidth]{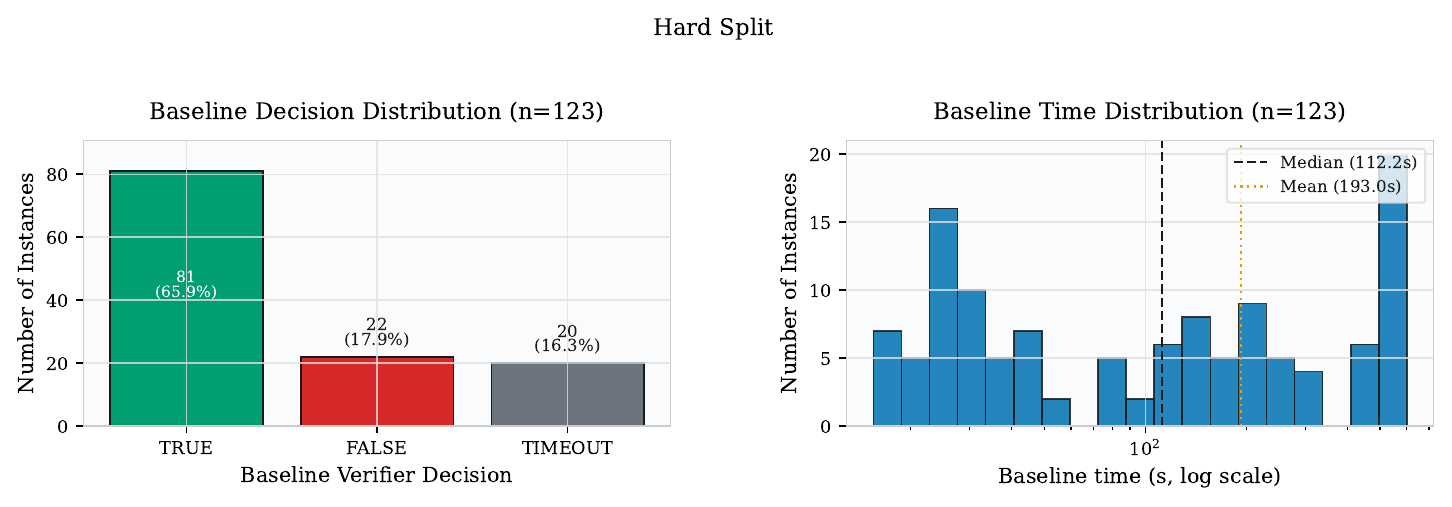}
  \caption{%
    \textbf{Hard evaluation split baseline characterization.}
    72 programs expanded to 123 per-loop instances
    (\texttt{median\_timing}${>}15$\,s).
    \textbf{(a)}~Baseline verifier decisions ($n{=}123$ instances).
    \textbf{(b)}~Baseline time distribution ($n{=}123$; instance-level baseline VBP).
  }
  \label{fig:dataset-statistics-v2-hard}
\end{figure}

\subsection{Benchmark Characterization}
\label{app:benchmark-characterization}

The hard evaluation split comprises 72 SV-COMP programs (which expands to 123 per-loop instances).
\cref{fig:benchmark-characterization} summarizes the code structure:
\begin{figure*}[ht]
  \centering
  \includegraphics[width=\textwidth]{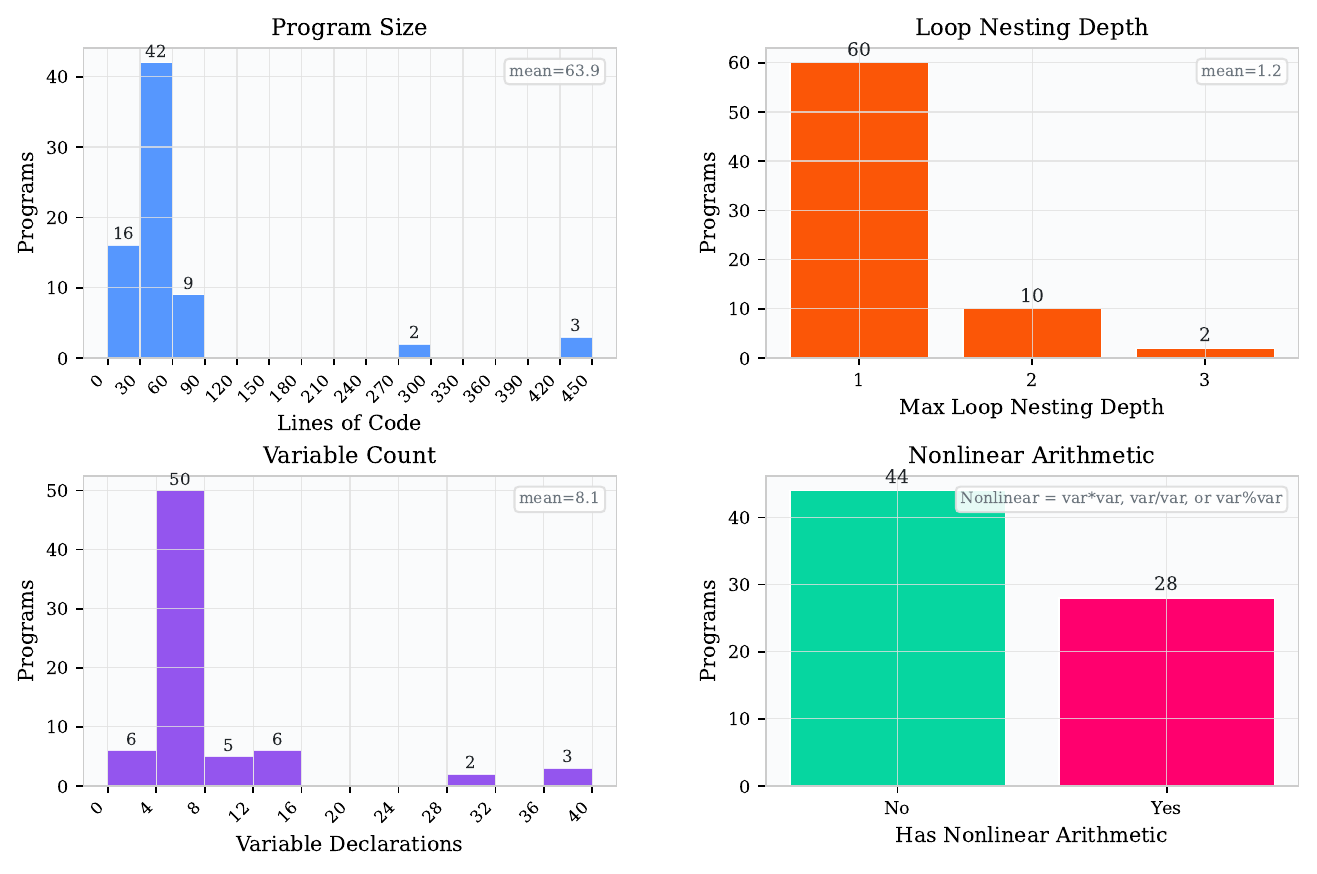}
  \caption{%
    \textbf{Benchmark characterization on hard programs} ($n{=}72$ unique
    C programs).
  }
  \label{fig:benchmark-characterization}
\end{figure*}

\subsection{Post-hoc Timeout Sweep}
\label{app:timeout-sweep}
We performed a post-hoc timeout sweep ($T \in \{15, 30, \ldots, 600\}\,\mathrm{s}$), shown in \cref{fig:timeout-sweep}.
At each~$T$, rates count only instances that finish within~$T$.

\textsc{Wonda}-V2 dominates its base counterpart at every timeout on all three panels.
$R_{\mathrm{correct}}$ and $R_{\mathrm{speedup}}$ rise with~$T$ for both, but V2 climbs faster and plateaus substantially higher; gains are already visible at short timeouts.
$\mathrm{VBP}_{\mathrm{E2E}}$ stays lower for V2 throughout, while base models remain near the ${\sim}193$\,s verifier baseline.

\begin{figure*}[ht]
  \centering
  \includegraphics[width=\textwidth]{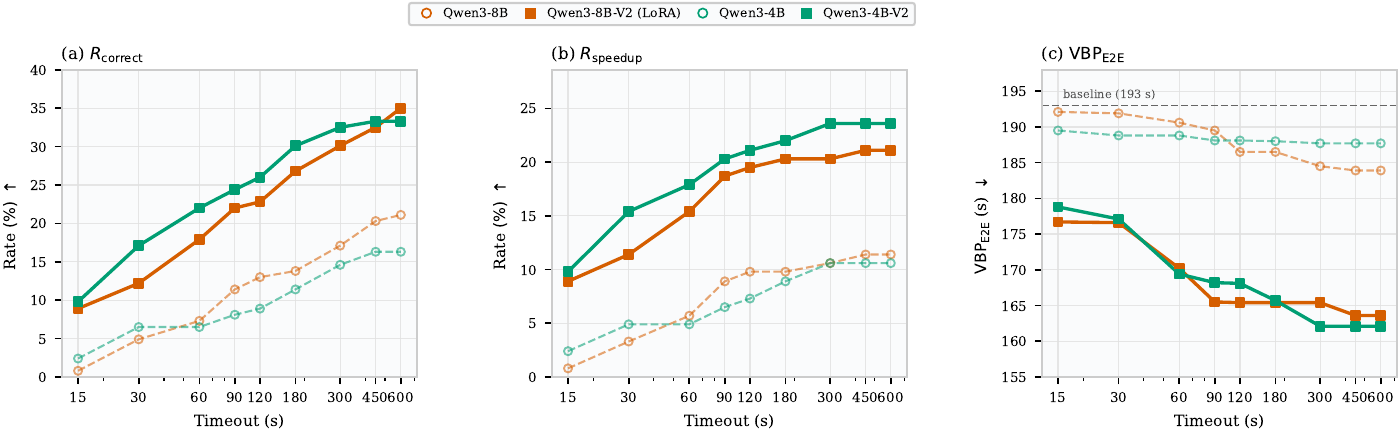}
  \caption{%
    \textbf{Post-hoc timeout sweep on hard instances} ($n{=}123$;
    $T \in \{15, 30, \ldots, 600\}\,\mathrm{s}$).
    \textbf{(a)}~$R_{\mathrm{correct}}$,
    \textbf{(b)}~$R_{\mathrm{speedup}}$, and
    \textbf{(c)}~$\mathrm{VBP}_{\mathrm{E2E}}$ vs.\ $T$
    for Qwen3-4B/8B base (circles, dashed) and \textsc{Wonda}-V2 (squares, solid;
    8B-V2 uses LoRA).
    Dashed line in~(c): solver baseline (${\approx}193$\,s).
  }
  \label{fig:timeout-sweep}
  \label{fig:R1}  
\end{figure*}

\newpage
\section{Model training details}\label{app:training-details}
\begin{table}[H]
\centering
\footnotesize
\setlength{\tabcolsep}{5pt}
\renewcommand{\arraystretch}{1.1}
\caption{Full SFT hyperparameters for all other fine-tuned models on invariant generation (no-think mode).}
\label{tab:sft-hyperparams-full}
\begin{tabular}{@{}l c c c@{}}
\toprule
\textbf{Hyperparameter} &
\shortstack[c]{\textbf{Qwen3-0.6B /}\\\textbf{4B}} &
\shortstack[c]{\textbf{Mistral-7B /}\\\textbf{Llama3.1-8B}} &
\textbf{Qwen3-14B} \\
\midrule
Learning Rate & $1 \times 10^{-4}$ & $2 \times 10^{-5}$ & $5 \times 10^{-5}$ \\
LR Scheduler &
\shortstack[c]{\texttt{cosine\_with\_min\_lr}\\(min ratio: 0.1)} &
\texttt{cosine} &
\shortstack[c]{\texttt{cosine\_with\_min\_lr}\\(min ratio: 0.1)} \\
Warmup Ratio & 0.03 & 0.05 & 0.03 \\
Epochs & 2 & 3 & 3 \\
Effective Batch Size & 32 & 32 & 32 \\
Max Seq.\ Length & 1024 & 1024 & 1024 \\
\bottomrule
\end{tabular}
\end{table}

\begin{table}[H]
\centering
\small
\caption{Supervised Fine-Tuning (SFT) Hyperparameters for Qwen3-8B-V2 on invariant generation with LoRA~\citep{Hu2022LoRALA}.}
\label{tab:sft-hyperparams-qwen8b-lora}
\begin{tabular}{@{}llp{8cm}@{}}
\toprule
\textbf{Category} & \textbf{Hyperparameter} & \textbf{Value / Setting} \\
\midrule
\textit{Optimizer \& LR} & Learning Rate & $5 \times 10^{-4}$ \\
& LR Scheduler & \texttt{cosine\_with\_min\_lr} (min ratio: 0.1) \\
& Warmup Ratio & 0.03 \\
\midrule
\textit{Batch Config} & Epochs & 2 \\
& Effective Batch Size & 32 \\
& Max Seq.\ Length & 1024 tokens \\
\midrule
\textit{LoRA (PEFT)} & Rank ($r$) & 128 \\
& Alpha ($\alpha$) & 64 \\
& Dropout & 0.05 \\
& Target Modules & All linear layers + \texttt{embed\_tokens} \\
\bottomrule
\end{tabular}
\end{table}

\begin{table}[ht]
\centering
\small
\begin{subtable}[t]{0.48\textwidth}
\centering
\caption{Qwen3 non-think models (0.6B--14B).}
\label{tab:qwen-nt-sampling}
\begin{tabular}{@{}ll@{}}
\toprule
\textbf{Hyperparameter} & \textbf{Value} \\
\midrule
Max New Tokens & 1024 \\
Temperature & 0.7 \\
Top-$p$ & 0.8 \\
Top-$k$ & 20 \\
Min-$p$ & 0 \\
Repetition Penalty & 1.1 \\
\bottomrule
\end{tabular}
\end{subtable}
\hfill
\begin{subtable}[t]{0.48\textwidth}
\centering
\caption{Mistral-7B-Instruct-v0.3 and Llama-3.1-8B-Instruct.}
\label{tab:mistral-llama-sampling}
\begin{tabular}{@{}lcc@{}}
\toprule
\textbf{Hyperparameter} &
\textbf{Mistral-7B} &
\textbf{Llama-3.1-8B} \\
& \textbf{Instruct} & \textbf{Instruct} \\
\midrule
Max New Tokens & 1024 & 1024 \\
Temperature & 0.6 & 0.6 \\
Top-$p$ & 0.9 & 0.9 \\
\bottomrule
\end{tabular}
\end{subtable}
\caption{Sampling hyperparameters used for each model.}
\label{tab:sampling-params}
\end{table}

\newpage
\section{Hardware, UAutomizer Release \& Configuration}\label{app:uautomizer_config}

We used UAutomizer release for SVCOMP-2025~\citep{heizmann2013ultimate} with the configuration shown in Table~\ref{tab:uautomizer-config}.


\begin{table}[h]
\centering
\footnotesize
\setlength{\tabcolsep}{4pt}
\renewcommand{\arraystretch}{0.95}
\caption{UAutomizer Configuration}
\label{tab:uautomizer-config}
\begin{tabular}{@{}lp{0.58\linewidth}@{}}
\toprule
\textbf{Parameter} & \textbf{Value} \\
\midrule
Version & 0.3.0-dev-d790fec\footnotemark\ (Java~21, jdk-21.0.1) \\
Property & \texttt{unreach-call.prp} \\
Property specification &
\texttt{CHECK( init(main()), LTL(G ! call(reach\_error())) )} \\
Architecture & 32-bit \\
Memory limit & 16\,GB (enforced via \texttt{runlim}\footnotemark) \\
Timeout & 600\,s per verification task \\
\bottomrule
\end{tabular}
\end{table}

\addtocounter{footnote}{-1} 
\footnotetext{\url{https://zenodo.org/records/14209043}}
\stepcounter{footnote} 
\footnotetext{\url{https://github.com/arminbiere/runlim}}

\textbf{Hardware.}
\label{sec:hardware}
Experiments ran on a Linux SLURM cluster node with 8 AMD EPYC 9354 cores, 256 GB RAM, and one NVIDIA L40S GPU. The verifier was memory-limited to 16 GB using \texttt{runlim}.

\end{document}